\documentclass{article}


\usepackage[final]{neurips_2024}




\usepackage[utf8]{inputenc} 
\usepackage[T1]{fontenc}    
\usepackage{hyperref}       
\usepackage{url}            
\usepackage{booktabs}       
\usepackage{amsmath,amssymb,amsthm}
\usepackage{amsfonts}       
\usepackage{nicefrac}       
\usepackage{microtype}      
\usepackage{xcolor}         
\usepackage{algorithm}
\usepackage[noend]{algpseudocode}
\usepackage{subcaption}
\usepackage{cancel}
\usepackage{todonotes}
\setuptodonotes{inline}
\usepackage{pifont}
\usepackage{comment}
\usepackage{multibib}
\usepackage{wrapfig}
\usepackage{tikz}
\usetikzlibrary{bayesnet}
\newcites{app}{Supplementary References}

\newcommand{\cmark}{\ding{51}}
\newcommand{\xmark}{\ding{55}}

\usepackage{importance}
\newtheorem{definition}{Definition}
\newtheorem{proposition}{Proposition}
\newtheorem{theorem}{Theorem}

\newtheorem{corollary}{Corollary}[proposition]

\title{Divide-and-Conquer Predictive Coding: a Structured Bayesian Inference Algorithm}

%

\author{
    \textbf{Eli Sennesh$^{1}$, Hao Wu$^{2}$, Tommaso Salvatori$^{2,3}$}
    \\ $^1$Department of Psychology, Vanderbilt University, Nashville, TN, USA
    \\ $^2$VERSES AI Research Lab, Los Angeles, USA
    \\ $^3$Vienna University of Technology, Vienna, Austria
    \\ \texttt{eli.sennesh@vanderbilt.edu}, \  \texttt{wuhaomxhy@gmail.com}, \  \texttt{tommaso.salvatori@verses.ai}
}

\begin{document}

\maketitle

\begin{abstract}
  Unexpected stimuli induce ``error'' or ``surprise'' signals in the brain. The theory of predictive coding promises to explain these observations in terms of Bayesian inference by suggesting that the cortex implements variational inference in a probabilistic graphical model. However, when applied to machine learning tasks, this family of algorithms has yet to perform on par with other variational approaches in high-dimensional, structured inference problems. To address this, we introduce a novel predictive coding algorithm for structured generative models, that we call divide-and-conquer predictive coding (DCPC); it differs from other formulations of predictive coding, as it respects the correlation structure of the generative model and provably performs maximum-likelihood updates of model parameters, all without sacrificing biological plausibility. Empirically, DCPC achieves better numerical performance than competing algorithms and provides accurate inference in a number of problems not previously addressed with predictive coding. We provide an \href{https://github.com/esennesh/dcpc_paper}{open implementation} of DCPC in Pyro on Github.
\end{abstract}

\vspace{-1em}
\section{Introduction}
\label{sec:introduction}
\vspace{-1em}

In recent decades, the fields of cognitive science, machine learning, and theoretical neuroscience have borne witness to a flowering of successes in modeling intelligent behavior via statistical learning. Each of these fields has taken a different approach: cognitive science has studied probabilistic \emph{inverse inference}~\citep{Chater2006,Pouget2013,Lake2017} in models of each task and environment, machine learning has employed the backpropagation of errors~\citep{Rumelhart1986,Lecun2015,Schmidhuber2015}, and neuroscience has hypothesized that \emph{predictive coding} (PC)~\citep{Srinivasan1982,Rao1999,Friston2005,Bastos2012,Spratling2017,Hutchinson2019,Millidge2021} may explain neural activity in perceptual tasks. These approaches share in common a commitment to ``deep'' models, in which task processing emerges from the composition of elementary units.

In machine learning, PC-based algorithms have recently gained popularity for their theoretical potential to provide a more biologically plausible alternative to backpropagation for training neural networks~\citep{Salvatori2023,Song2024}. However, PC does not perform comparably in these tasks to backpropagation due to limitations in current formulations. First, predictive coding for gradient calculation typically models every node in the computation graph with a Gaussian, and hence fails to express many common generative models. Recent work on PC has addressed this by allowing approximating non-Gaussian energy functions with samples~\citep{Pinchetti2022}. Second, the Laplace approximation to the posterior infers only a maximum-a-posteriori (MAP) estimate and Gaussian covariance for each latent variable, keeping PC from capturing multimodal or correlated distributions. Third, this loose approximation to the posterior distribution results in inaccurate, high-variance updates to the parameters of the generative model.

In this work we propose a new algorithm, \emph{divide-and-conquer predictive coding} (DCPC), for approximating structured target distributions with populations of Monte Carlo samples. DCPC goes beyond Gaussian assumptions, and decomposes the problem of sampling from structured targets into local coordinate updates to individual random variables. These local updates are informed by unadjusted Langevin proposals parameterized in terms of biologically plausible prediction errors. Nesting the local updates within divide-and-conquer Sequential Monte Carlo~\citep{Lindsten2017,Kuntz2024} ensures that DCPC can target any statically structured graphical model, while Theorem~\ref{thm:ppc_local_likelihood} provides a locally factorized way to learn model parameters by maximum marginal likelihood.

DCPC also provides a computational perspective on the canonical cortical microcircuit~\citep{Bastos2012,Bastos2020,Campagnola2022} hypothesis in neuroscience. Experiments have suggested that deep laminar layers in the cortical microcircuit represent sensory imagery, while superficial laminar represent raw stimulus information~\citep{Bergmann2024}; experiments in a predictive coding paradigm specifically suggested that the deep layers represent ``predictions'' while the shallow layers represent ``prediction errors''.
This circuitry could provide the brain with its fast, scalable, generic Bayesian inference capabilities. Figure~\ref{fig:dcpc-structure} compares the computational structure of DCPC with that of previous PC models. The following sections detail the contributions of this  work:
\begin{itemize}
    \item Section~\ref{sec:algorithm} defines the divide-and-conquer predictive coding algorithm and shows how to use it as a variational inference algorithm;
    \item Section~\ref{sec:bioplausibility} examines under what assumptions the cortex could plausibly implement DCPC, proving two theorems that contribute to biological plausibility;
    \item Section~\ref{sec:experiments} demonstrates DCPC experimentally in head-to-head comparisons against recent generative models and inference algorithms from the predictive coding literature.
\end{itemize}
Section~\ref{sec:background} will review the background for Section~\ref{sec:algorithm}'s algorithm: the problem predictive coding aims to solve and a line of recent work adressing that problem from which this paper draws.

\begin{figure}
    \begin{subfigure}{0.45\textwidth}
        \centering
        \includegraphics[width=\columnwidth]{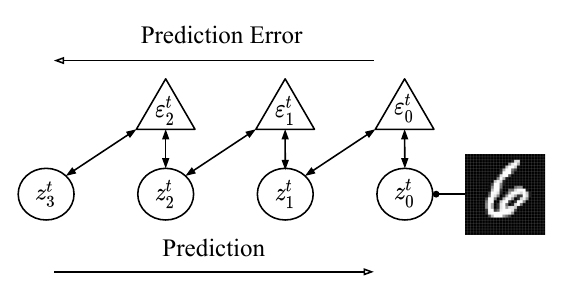}
    \end{subfigure}
    \begin{subfigure}{0.45\textwidth}
        \centering
        \includegraphics[width=\columnwidth]{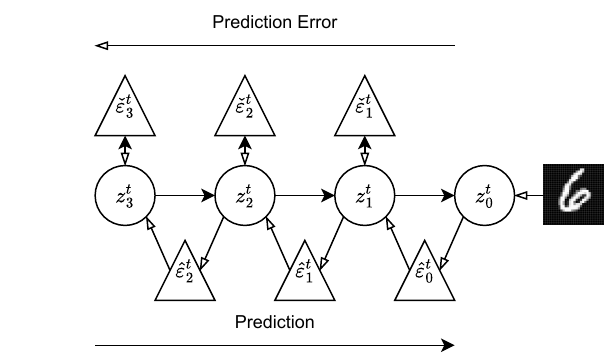}
    \end{subfigure}
    \caption{\textbf{Left}: Classical PC learns a mean-field approximate posterior with prediction error layers. \textbf{Right}: Divide-and-conquer PC approximates the joint posterior with bottom-up and recurrent errors. Where classical predictive coding has layers communicate through shared error units, divide-and-conquer predictive coding separates recurrent from ``bottom-up'' error pathways to target complete conditional distributions rather than posterior marginal distributions.}
    \label{fig:dcpc-structure}
    \vspace{-1.5em}
\end{figure}

\vspace{-1em}
\section{Background}
\label{sec:background}
\vspace{-1em}


This section reviews the background necessary to construct the divide-and-conquer predictive coding algorithm in Section~\ref{sec:algorithm}. Let us assume we have a directed, acyclic graphical model with a joint density split into observations $x \in \mathbf{x}$ and latents $z \in \mathbf{z}$, parameterized by some $\theta$ at each conditional density
\begin{align}
    \label{eq:factorization}
    \priorm{\theta}{\mathbf{x}, \mathbf{z}} &:= \prod_{x \in \mathbf{x}} \priorc{\theta}{x}{\parents{x}} \prod_{z \in \mathbf{z}} \priorc{\theta}{z}{\parents{z}},
\end{align}
where $\parents{z}$ denotes the parents of the random variable $z \in \mathbf{z}$ and $\children{z}$ denotes its children.

\paragraph{Empirical Bayes}
\emph{Empirical Bayes} consists of jointly estimating, in light of the data, both the parameters $\theta^{*}$ and the Bayesian posterior over the latent variables $\mathbf{z}$, that is:
\begin{align*}
    \theta^{*} &= \arg \max_{\theta} \priorm{\theta}{\mathbf{x}} = \arg \max_{\theta} \expectint{\priorm{\theta}{\mathbf{x}, \mathbf{z}}\: d\mathbf{z}}{\mathbf{z} \in \mathcal{Z}}{}, &
    \priorc{\theta^{*}}{\mathbf{z}}{\mathbf{x}} &:= \frac{
        \priorm{\theta^{*}}{\mathbf{x}, \mathbf{z}}
    }{
        \priorm{\theta^{*}}{\mathbf{x}}
    }.
\end{align*}

Typically the marginal and posterior densities have no closed form, so learning and inference algorithms treat the joint distribution as a closed-form \emph{un}normalized density over the latent variables; its integral then gives the normalizing constant for approximation
\begin{align*}
    \targetm{\theta}{\mathbf{z}} &:= \priorm{\theta}{\mathbf{x}, \mathbf{z}}, &
    Z_{\theta} &:= \expectint{\targetm{\theta}{\mathbf{z}}\: d\mathbf{z}}{\mathbf{z} \in \mathcal{Z}}{} = \priorm{\theta}{\mathbf{x}}, &
    \posteriorm{\theta}{\mathbf{z}} &:= \frac{\targetm{\theta}{\mathbf{z}}}{Z_{\theta}}.
\end{align*}
\citet{Neal1998} reduced empirical Bayes to minimization of the \emph{variational free energy}:
\begin{align}
\label{eq:vfe}
    \mathcal{F}(\theta, q) &:= \expect{\mathbf{z}\sim q(\mathbf{z})}{-\log \frac{\targetm{\theta}{\mathbf{z}}}{q(\mathbf{z})}} \geq -\log \normalizer{}{\theta}.
\end{align}
The ratio of densities in Equation~\ref{eq:vfe} is an example of a \emph{weight} used to approximate a distribution known only up to its normalizing constant. The \emph{proposal} distribution $q(\mathbf{z})$ admits tractable sampling, while the unnormalized \emph{target} density $\targetm{\theta}{\mathbf{z}}$ admits tractable, pointwise density evaluation.
\paragraph{Predictive Coding}
Computational neuroscientists now often hypothesize that \emph{predictive coding} (PC) can optimize the above family of objective functionals in a local, neuronally plausible way \citep{Millidge2021,Millidge2023}. More in detail, it is possible to define this class of algorithms as follows: 
\begin{definition}[Predictive Coding Algorithm]
\label{def:pc_alg}
Consider approximate inference in a model $\priorm{\theta}{\mathbf{x}, \mathbf{z}}$ using an algorithm $\mathcal{A}$. \citet{Salvatori2023} calls $\mathcal{A}$ a \emph{predictive coding algorithm} if and only if:
\begin{enumerate}
    \item It maximizes the model evidence $\log \priorm{\theta}{\mathbf{x}}$ by minimizing a variational free energy;
    \item The proposal $\proposalm{}{\mathbf{z}} = \prod_{z \in \mathbf{z}} \proposalm{}{z}$ factorizes via a mean-field approximation; and
    \item Each proposal factor is a Laplace approximation (i.e.~$\proposalm{\mu}{z} := \mathcal{N}(\mu, \Sigma(\mu))$).
\end{enumerate}
\end{definition}

\paragraph{Particle Algorithms} In contrast to predictive coding, particle algorithms approach empirical Bayes problems by setting the proposal to a collection of weighted particles $(w^{k}, \mathbf{z}^{k})$ drawn from a sampling algorithm meeting certain conditions (see Definition~\ref{def:spw_density} in Appendix~\ref{app:formal_theorems}). Any proposal meeting these conditions (see Proposition~\ref{prop:elbo_bound} in Appendix~\ref{app:formal_theorems} and \citet{Naesseth2015,Stites2021}) defines a free energy functional, analogous to Equation~\ref{eq:vfe} in upper-bounding the model surprisal:
\begin{align*}
    \mathcal{F}(\theta, q) := \expect{w, \mathbf{z} \sim q(w, \mathbf{z})}{-\log w}
    &\implies
    \mathcal{F}(\theta, q) \geq -\log \normalizer{}{\theta}.
\end{align*}
This paper builds on the particle gradient descent (PGD) algorithm of \citet{Kuntz2023}, that works as follows: At each iteration $t$, PGD diffuses the particle cloud $\proposalm{K}{\mathbf{z}} = \frac{1}{K}\sum_{k=1}^{K} \dirac{\mathbf{z}^{k}}{\mathbf{z}}$ across the target log-density with a learning rate $\eta$ and independent Gaussian noise; it then updates the parameters $\theta$ by ascending the gradient of the log-likelihood, estimated by averaging over the particles. The update rules are then the following:
\begin{align}
    \label{eq:pgd_z}
    \mathbf{z}^{t+1, k} &:= \mathbf{z}^{t, k} + \eta \nabla_{\mathbf{z}} \log \targetm{\theta^{t}}{\mathbf{z}^{t, k}} + \sqrt{2\eta} \xi^k, \\
    \label{eq:pgd_theta}
    \theta^{t+1} &:= \theta^{t} + \eta \left(\frac{1}{K} \sum_{k=1}^{K} \nabla_{\theta} \log \targetm{\theta^{t}}{\mathbf{z}^{t+1, k}} \right).
\end{align}
The above equations target the joint density of an entire graphical model\footnote{\citet{Kuntz2023} also interpreted Equation~\ref{eq:pgd_z} as an update step along the Wasserstein gradient in the space of probability measures. Appendix~\ref{app:discrete_wasserstein} extends this perspective to predictive coding of discrete random variables.}. When the prior $\priorm{\theta}{\mathbf{z}}$ factorizes into many separate conditional densities, achieving high inference performance often requires factorizing the inference network or algorithm into conditionals as well~\citep{Webb2018}. Estimating the gradient of the entire log-joint, as in PGD and amortized inference~\citep{Dasgupta2020,Peters2024}, also requires nonlocal backpropagation. To provide a generic inference algorithm for high-dimensional, structured models using only local computations, Section~\ref{sec:algorithm} will apply Equation~\ref{eq:pgd_z} to sample individual random variables in a joint density, combine the coordinate updates via sequential Monte Carlo, and locally estimate gradients for model parameters via Equation~\ref{eq:pgd_theta}.

\vspace{-1em}
\section{Divide-and-Conquer Predictive Coding}
\label{sec:algorithm}
\vspace{-1em}

\begin{table}[t]
    \centering
    \begin{tabular}{c|c|c|c|c}
        \hline
         & PC & LPC & MCPC & DCPC (ours) \\
        \hline
        Generative density & Gaussian & Differentiable & Gaussian & Differentiable \\
        Inference approximation & Laplace & Gaussian & Empirical & Empirical \\
        Posterior conditional structure & \xmark & \xmark & \xmark & \cmark \\
        \hline
    \end{tabular}
    \caption{Comparison of divide-and-conquer predictive coding (DCPC) against other predictive coding algorithms. DCPC provides the greatest flexibility: arbitrary differentiable generative models, an empirical approximation to the posterior, and sampling according to the target's conditional structure.}
    \label{tab:ppc_vs_pcs}
    \vspace{-2em}
\end{table}

The previous section provided a mathematical toolbox for constructing Monte Carlo algorithms based on gradient updates and a working definition of predictive coding. This section will combine those tools to generalize the above notion of predictive coding, yielding the novel \emph{divide-and-conquer predictive coding} (DCPC) algorithm. Given a causal graphical model, DCPC will approximate the posterior with a population $q(\mathbf{z})$ of $K$ samples, while also learning $\theta$ explaining the data. This will require deriving  local coordinate updates and then parameterizing them in terms of prediction errors.

Let us assume we again have a causal graphical model $\priorm{\theta}{\mathbf{x}, \mathbf{z}}$ locally parameterized by $\theta$ and factorized (as in Equation~\ref{eq:factorization}) into conditional densities for each $x \in \mathbf{x}$ and $z \in \mathbf{z}$. DCPC then requires two hyperparameters: a learning rate $\eta \in \mathbb{R}^{+}$, and particle count $K \in \mathbb{N}^{+}$, and is initialized  (at $t=0$) via a population of predictions by ancestor sampling defined as $\mathbf{z}^{0} \sim \prod_{z \in \mathbf{z}} \priorc{\theta}{z^{0}}{\parents{z^{0}}}$.


DCPC aims to minimize the variational free energy (Equation~\ref{eq:vfe}).
The optimal proposal $q_{*}$ for each random variable would equal, if it had closed form, the \emph{complete conditional} density for that variable, containing all information from other random variables
\begin{align}
    \label{eq:complete_conditional}
    \proposalc{*}{\mathbf{z}^{t}}{\mathbf{z}^{t-1}} \propto \targetc{\theta}{z}{\mathbf{z}_{\setminus z}} &= \priorc{\theta}{z}{\parents{z}} \prod_{v \in \children{z}} \priorc{\theta}{v}{\parents{v}}.
\end{align}

    
We observe that the prediction errors $\varepsilon_{z}$ in classical predictive coding, usually defined as the precision weighted difference between predicted and actual value of a variable, can be seen as the \emph{score function} of a Gaussian, where the score is the gradient with respect to the parameter $z$ of the log-likelihood:
\begin{align*}
    \varepsilon_{z} &:= \nabla_{z} \log \mathcal{N}(z, \tau) = \tau \left(x - z\right);
\end{align*}
When given the ground-truth parameter $z$, the \emph{expected} score function $\expect{x \sim \priorc{}{x}{z}}{\nabla_{z} \log \priorc{}{x}{z}}$ under the likelihood becomes zero, making score functions a good candidate for implementing predictive coding. We therefore define $\varepsilon_{z}$ in DCPC as the complete conditional's score function
\begin{align}
    \label{eq:pe}
    \varepsilon_{z} &:= \nabla_{z} \log \targetc{\theta}{z}{\mathbf{z}_{\setminus z}} = \nabla_{z} \log \priorc{\theta}{z}{\parents{z}} + \sum_{v \in \children{z}} \nabla_{z} \log \priorc{\theta}{v}{\parents{v}}.
\end{align}
This gradient consists of a sum of local prediction-error terms: one for the local ``prior'' on $z$ and one for each local ``likelihood'' of a child variable. By defining the prediction error as a sum of local score functions, we write Equation~\ref{eq:pgd_z} in terms of $\varepsilon_{z}$ (Equation~\ref{eq:pe}) and the preconditioner of Definition~\ref{def:pc_fim}:
\begin{align*}
    \proposalc{\eta}{z^{t}}{\varepsilon^{t}_{z}, z^{t-1}} &:= \mathcal{N}\left(z^{t-1} + \eta \Hat{\Sigma}_{\mathcal{I}} \varepsilon^{t}_{z} , 2\eta \Hat{\Sigma}_{\mathcal{I}}\right).
\end{align*}
The resulting proposal now targets the complete conditional density (Equation~\ref{eq:complete_conditional}), simultaneously meeting the informal requirement of Definition~\ref{def:pc_alg} for purely local proposal computations while also ``dividing and conquering'' the sampling problem into lower-dimensional coordinate updates.

Since the proposal from which we can sample by predictive coding is not the optimal coordinate update, we importance weight for the true complete conditional distribution that is optimal
\begin{align}
\label{eq:coord_weight}
    z^{t} &\sim \proposalc{\eta}{z^{t}}{z^{t-1}, \varepsilon^{t}_{z}} &
    u^{t}_{z} &= \frac{
        \targetc{\theta^{t-1}}{z^{t}}{\mathbf{z}_{\setminus z}}
    }{
        \proposalc{\eta}{z^{t}}{z^{t-1}, \varepsilon^{t}_{z}}
    };
\end{align}
resampling with respect to these weights corrects for discretization error, yields particles distributed according to the true complete conditional, and estimates the complete conditional's normalizer
\begin{align*}
    \textsc{RESAMPLE}\left(z^{t}, u^{t}_{z} \right) &\sim \posteriorc{\theta^{t-1}}{z^{t}}{\mathbf{z}_{\setminus z}}, &
    \normalizerHat{\theta^{t-1}}{\mathbf{z}_{\setminus z}}^{t} &:= \frac{1}{K} \sum_{k=1}^{K} u^{t, k}_{z}.
\end{align*}
The recursive step of ``Divide and Conquer'' Sequential Monte Carlo~\citep{Lindsten2017,Kuntz2024} exploits the estimates $\normalizerHat{\theta^{t-1}}{\mathbf{z}_{\setminus z}}^{t}$ to weigh the samples for the complete target density
\begin{align}
    \label{eq:target_weight}
    w_{\theta^{t-1}}^{t} &= \frac{
        \priorm{\theta^{t-1}}{\mathbf{x}, \mathbf{z}^{t}}
    }{
        \prod_{z \in \mathbf{z}} \targetc{\theta}{z^{t}}{\mathbf{z}_{\setminus z}}
    }
    \prod_{z \in \mathbf{z}} \normalizerHat{\theta^{t-1}}{\mathbf{z}_{\setminus z}}^{t}.
\end{align}
By Proposition~\ref{prop:elbo_bound}, log-transforming these weights estimates the free energy (Equation~\ref{eq:vfe}):
\begin{align*}
    \mathcal{F}^{t}(\mathbf{z}^{t-1}, \theta^{t-1}) &:= \expect{
        \proposalc{*}{\mathbf{z}^{t}}{\mathbf{z}^{t-1}}
    }{
        - \log \frac{
            \priorm{\theta^{t-1}}{\mathbf{x}, \mathbf{z}^{t}}
        }{
            \proposalc{*}{\mathbf{z}^{t}}{\mathbf{z}^{t-1}}
        }
    } \approx \expect{q}{- \log w_{\theta^{t-1}}^{t}}.
\end{align*}
Theorem~\ref{prop:free_energy_gradient} in Appendix~\ref{app:formal_theorems} shows that the gradient $\nabla_{\theta^{t-1}} \mathcal{F}^{t} = \expect{q}{- \nabla_{\theta^{t-1}} \log \priorm{\theta^{t-1}}{\mathbf{x}, \mathbf{z}^{t}}}$ of the above estimator equals the expected gradient of the log-joint distribution.
Descending this gradient $\theta^{t} := \theta^{t-1} - \eta \nabla_{\theta^{t-1}} \mathcal{F}^{t}$ enables DCPC to learn model parameters $\theta$.

\begin{algorithm}[!tb]
    \caption{Divide-and-Conquer Predictive Coding for empirical Bayes}
    \label{alg:ppc}
\begin{algorithmic}[1]
    \Require learning rate $\eta \in \mathbb{R}^{+}$, particle count $K \in \mathbb{N}$, number of sweeps $S \in \mathbb{N}$
    \Require initial particle vector $\mathbf{z}^{0}$, initial parameters $\theta^{0}$, observations $\mathbf{x} \in \mathcal{X}$
    \For{$t \in [1 \ldots T]$}\label{line:ppc_time_loop}\Comment{Loop through predictive coding steps}
        \For{$s \in [1 \ldots S]$}\label{line:ppc_sweeps_loop}\Comment{Loop through Gibbs sweeps over graphical model}
            \For{$z \in \mathbf{z}$}\label{line:ppc_blocks_loop}\Comment{Loop through latent variables in graphical model}
                \State $\varepsilon_{z} \leftarrow \nabla_{\mathbf{z}} \log \priorc{\theta^{t-1}}{z}{\parents{z}}$ \Comment{Local prediction error}
                \State $\varepsilon_{z} \leftarrow \varepsilon_{z} + \sum_{v \in \children{z}} \nabla_{\mathbf{z}} \log \priorc{\theta^{t-1}}{v}{\parents{v}}$ \Comment{Children's prediction errors}
                \State $\Hat{\Sigma}_{\mathcal{I}} \leftarrow \frac{\Hat{\mathcal{I}}_{K}(\varepsilon^{1:K}_{z})^{-1}}{\frac{1}{d} \mathrm{Tr}[\Hat{\mathcal{I}}_{K}(\varepsilon^{1:K}_{z})^{-1}]}$ \Comment{Estimate precision of prediction errors}
                \State $z^{t} \sim \proposalc{\eta}{z^{t}}{\varepsilon_{z}, z^{t-1}}$ \Comment{Sample coordinate update}
                \State $u^{t}_{z} \leftarrow \frac{\targetc{\theta^{t-1}}{z^{t}}{\mathbf{z}_{\setminus z}}}{\proposalc{\eta}{z^{t}}{\varepsilon_{z}, z^{t-1}}}$ \Comment{Correct coordinate update by weighing}
                \State $z^{t} \leftarrow \textsc{RESAMPLE}\left(z^{t}, u^{t}_{z} \right)$\Comment{Resample from true coordinate update}
                \State $\normalizerHat{\theta^{t-1}}{\mathbf{z}_{\setminus z}}^{t} \leftarrow \frac{1}{K} \sum_{k=1}^{K} u^{t, k}_{z}$\Comment{Estimate coordinate update's normalizer}
            \EndFor
        \EndFor
        \State $\mathcal{F}^{t} \leftarrow -\frac{1}{K} \sum_{k=1}^{K} \log \left(\frac{
            \priorm{\theta^{t-1}}{\mathbf{x}, \mathbf{z}^{t,k}}
        }{
            \prod_{z \in \mathbf{z}} \targetc{\theta^{t-1}}{z^{t,k}}{\mathbf{z}^{t,k}_{\setminus z}}
        }
        \prod_{z \in \mathbf{z}} \normalizerHat{\theta^{t-1}}{\mathbf{z}_{\setminus z}}^{t}\right)$ \Comment{Update free energy}
        \State $\theta^{t} \leftarrow \theta^{t-1} + \eta\frac{1}{K}\sum_{k=1}^{K}
            \nabla_{\theta^{t-1}} \log \priorm{\theta^{t-1}}{\mathbf{x}, \mathbf{z}^{t,k}} $\Comment{Update parameters}
    \EndFor
    \State \Return $\mathbf{z}^{T}$, $\theta^{T}, \mathcal{F}^{T}$\Comment{Output: updated particles, parameters, free energy}
\end{algorithmic}
\end{algorithm}

The above steps describe a single pass of divide-and-conquer predictive coding over a causal graphical model. Algorithm~\ref{alg:ppc} shows the complete algorithm, consisting of nested iterations over latent variables $z \in \mathbf{z}$ (inner loop) and iterations $t \in T$ (outer loop). DCPC satisfies criteria (1) and (2) of Definition~\ref{def:pc_alg}, and relaxes criterion (3) to allow gradient-based proposals beyond the Laplace assumption. As with \citet{Pinchetti2022} and \citet{Oliviers2024}, relaxing the Laplace assumption enables much greater flexibility in approximating the model's true posterior distribution.


\vspace{-1em}
\section{Biological plausibility}
\label{sec:bioplausibility}
\vspace{-1em}

Different works in the literature consider different criteria for biological plausibility. This paper follows the non-spiking predictive coding literature and considers an algorithm biologically plausible if it performs only spatially local computations in a probabilistic graphical model~\citep{whittington2017approximation}, without requiring a global control of computation. However, while in the standard literature locality is either directly defined in the objective function~\citep{Rao1999}, or derived from a mean-field approximation to the joint density~\citep{Friston2005}, showing that the updates of the parameters of DCPC require only local information is not as trivial. To this end, in this section we first formally show that  DCPC achieves decentralized inference of latent variables $\mathbf{z}$ (Theorem~\ref{thm:ppc_complete_conditionals}), and then that also the parameters $\theta$ are updated via local information (Theorem~\ref{thm:ppc_local_likelihood}). 


Gibbs sampling provides the most widely-used algorithm for sampling from a high-dimensional probability distribution by local signaling. It consists of successively sampling coordinate updates to individual nodes in the graphical model by targeting their complete conditional densities $\posteriorc{\theta}{z}{\mathbf{x}, \mathbf{z}_{\setminus z}}$. Theorem~\ref{thm:ppc_complete_conditionals} demonstrates that DCPC's coordinate updates approximate Gibbs sampling.
\begin{theorem}[DCPC coordinate updates sample from the true complete conditionals]
\label{thm:ppc_complete_conditionals}
Each DCPC coordinate update (Equation~\ref{eq:coord_weight}) for a latent $z \in \mathbf{z}$ samples from $z$'s complete conditional (the normalization of Equation~\ref{eq:complete_conditional}). Formally, for every measurable $h: \mathcal{Z} \rightarrow \mathbb{R}$, resampled expectations with respect to the DCPC coordinate update equal those with respect to the complete conditional
\begin{align*}
    \expect{z \sim \proposalc{\eta}{z}{z^{t-1}, \varepsilon^{t}_{z}}}{
        \expect{u \sim \delta(u), z' \sim \textsc{RESAMPLE}\left(z, u_{z} \right)}{
            h(z)
        }
    } &= \expectint{\posteriorc{\theta}{z}{\mathbf{z}_{\setminus z}}\: dz}{z \in \mathcal{Z}}{h(z)}.
\end{align*}
\end{theorem}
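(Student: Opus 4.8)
The plan is to recognize each DCPC coordinate update of Equation~\ref{eq:coord_weight} as a single step of self-normalized importance sampling, and to establish its correctness through the (sequentially) properly weighted sample formalism of Definition~\ref{def:spw_density} and Proposition~\ref{prop:elbo_bound} (cf.~\citet{Naesseth2015,Stites2021}). Fix a latent $z$; throughout, condition on the remaining ensemble coordinates $\mathbf{z}_{\setminus z}$ and the previous value $z^{t-1}$, which are all held fixed during a Gibbs sweep. Abbreviate the Langevin Gaussian proposal $q(\cdot) := \proposalc{\eta}{\cdot}{z^{t-1}, \varepsilon^{t}_{z}}$, the unnormalized complete conditional $\tilde\pi(\cdot) := \targetc{\theta}{\cdot}{\mathbf{z}_{\setminus z}}$ of Equation~\ref{eq:complete_conditional}, and its normalizer $Z := \int_{\mathcal{Z}} \tilde\pi(z)\, dz$, so that $\posteriorc{\theta}{\cdot}{\mathbf{z}_{\setminus z}} = \tilde\pi/Z$. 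The weight in Equation~\ref{eq:coord_weight} is then exactly $u^{t}_{z} = \tilde\pi(z)/q(z)$, and $\delta(u)$ in the statement merely records that $u^{t}_{z}$ is a deterministic function of the sampled $z$.

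The first step is to verify that $q$ is a legitimate importance-sampling proposal for $\tilde\pi$. By construction $q$ is the Gaussian $\mathcal{N}(z^{t-1} + \eta \Hat{\Sigma}_{\mathcal{I}} \varepsilon^{t}_{z},\, 2\eta \Hat{\Sigma}_{\mathcal{I}})$; since the score $\varepsilon^{t}_{z}$ (Equation~\ref{eq:pe}) and the preconditioner are finite and $\Hat{\Sigma}_{\mathcal{I}}$ is positive definite, $q$ has a strictly positive, finite density on all of $\mathcal{Z}$. Hence the complete conditional is absolutely continuous with respect to $q$ and $u^{t}_{z}$ is well defined and finite $q$-almost surely; assuming the model's conditionals make the complete conditional proper ($Z < \infty$), a change of variables gives, for every bounded measurable $h$,
\begin{align*}
    \expect{z \sim q}{u^{t}_{z}\, h(z)} = \int_{\mathcal{Z}} h(z)\, \tilde\pi(z)\, dz = Z \cdot \expectint{\posteriorc{\theta}{z}{\mathbf{z}_{\setminus z}}\: dz}{z \in \mathcal{Z}}{h(z)} ,
\end{align*}
and taking $h \equiv 1$ yields $\expect{z \sim q}{u^{t}_{z}} = Z$. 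Thus the pair $(u^{t}_{z}, z)$ — extended to the $K$-particle ensemble by drawing i.i.d.\ from $q$ — is a properly weighted sample for the complete conditional in the sense of Definition~\ref{def:spw_density}, and $\normalizerHat{\theta}{\mathbf{z}_{\setminus z}}^{t} = \frac{1}{K}\sum_k u^{t,k}_{z}$ is an unbiased estimator of $Z$.

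The second step is to invoke the standard property that $\textsc{RESAMPLE}$ preserves proper weighting: applied to an ensemble that properly weights the complete conditional, it returns an ensemble that again represents that distribution (this is precisely the resampling step of divide-and-conquer SMC; see \citet{Naesseth2015,Stites2021} and Appendix~\ref{app:formal_theorems}). Therefore the law of a resampled particle $z'$ is $\posteriorc{\theta}{\cdot}{\mathbf{z}_{\setminus z}}$, and combining this with the displayed identity (now with a general $h$ rather than the constant function) gives
\begin{align*}
    \expect{z \sim \proposalc{\eta}{z}{z^{t-1}, \varepsilon^{t}_{z}}}{ \expect{u \sim \delta(u),\, z' \sim \textsc{RESAMPLE}(z, u_{z})}{h(z)} } = \expectint{\posteriorc{\theta}{z}{\mathbf{z}_{\setminus z}}\: dz}{z \in \mathcal{Z}}{h(z)} ,
\end{align*}
which is the claim.

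The main obstacle is making the $\textsc{RESAMPLE}$ step precise. For a finite particle count the self-normalized average $\sum_k \bar u^{k} h(z^{k})$ is only consistent, not exactly unbiased, so the equality must be understood within the properly-weighted-sample formalism — equivalently, as a statement about the idealized coordinate-update kernel, with the weight stripped from the resampled particles being exactly the quantity accumulated into $\normalizerHat{\theta}{\mathbf{z}_{\setminus z}}^{t}$ and propagated through the target weight $w^{t}_{\theta}$ of Equation~\ref{eq:target_weight}. Identifying which resampling convention and which notion of ``equality'' the statement intends, and citing the matching resampling lemma, is the delicate part; the remaining technical obligations — propriety of the complete conditional ($Z<\infty$) and finiteness of $\varepsilon^{t}_{z}$, ensuring the Gaussian proposal and its weight are well defined — are routine given the model assumptions.
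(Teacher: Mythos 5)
Your proof is correct and takes essentially the same two-step route as the paper's (Proposition~\ref{prop:ppc_complete_conditionals} followed by Corollary~\ref{cor:ppc_complete_conditionals}): first establish that the weighted Langevin proposal is strictly properly weighted for the unnormalized complete conditional because the proposal density cancels against the weight's denominator, then invoke the fact that resampling preserves proper weighting and use self-normalization to pass to the normalized complete conditional. Your closing caveat---that for finite $K$ the self-normalized resampling step is only consistent rather than exactly unbiased, so the equality should be read within the properly-weighted-sample formalism---is a fair observation that the paper's own cancellation of the estimated normalizer against the true one also elides.
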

\begin{proof}
See Corollary~\ref{cor:ppc_complete_conditionals} in Appendix~\ref{app:formal_theorems}.
\vspace{-1.25em}
\end{proof}
We follow the canonical cortical microcircuit hypothesis of predictive coding~\citep{Bastos2012,Gillon2023} or predictive routing~\citep{Bastos2020}. Consider a cortical column representing $z \in \mathbf{z}$; the $\theta$, $\alpha/\beta$, and $\gamma$ frequency bands of neuronal oscillations~\citep{Buzsaki2004} could synchronize parallelizations (known to exist for simple Gibbs sampling in a causal graphical model~\citep{Gonzalez2011}) of the loops in Algorithm~\ref{alg:ppc}. From the innermost to the outermost and following the neurophysiological findings of \citet{Bastos2015,Fries2015}, $\gamma$-band oscillations could synchronize the bottom-up conveyance of prediction errors (lines 4-6) from L2/3 of lower cortical columns to L4 of higher columns, $\beta$-band oscillations could synchronize the top-down conveyance of fresh predictions (implied in passing from $s$ to $s+1$ in the loop of lines 2-9) from L5/6 of higher columns to L1+L6 of lower columns, and $\theta$-band oscillations could synchronize complete attention-directed sampling of stimulus representations (lines 1-11).
Figure~\ref{fig:laminar_circuitry} in Appendix~\ref{app:experiments} visualizes these hypotheses for how neuronal areas and connections could implement DCPC.


Biological neurons often spike to represent \emph{changes} in their membrane voltage~\citep{Mainen1995,Lundstrom2008,Forkosh2022}, and some have even been tested and found to signal the temporal derivative of the logarithm of an underlying signal~\citep{Adler2018,Borba2021}. Theorists have also proposed models~\citep{Chavlis2021,Moldwin2021} under which single neurons could calculate gradients internally. In short, if neurons can represent probability densities, as many theoretical proposals and experiments suggest they can, then they can likely also calculate the prediction errors used in DCPC. Theorem~\ref{thm:ppc_local_likelihood} will demonstrate that given the ``factorization'' above, DCPC's model learning requires only local prediction errors.


\begin{theorem}[DCPC parameter learning requires only local gradients in a factorized generative model]
\label{thm:ppc_local_likelihood}
Consider a graphical model factorized according to Equation~\ref{eq:factorization}, with the additional assumption that the model parameters $\theta \in \Theta = \prod_{x\in \mathbf{x}} \Theta_{x} \times \prod_{z\in \mathbf{z}} \Theta_{z}$ factorize disjointly. Then the gradient $\nabla_{\theta} \mathcal{F}(\theta, q)$ of DCPC's free energy similarly factorizes into a sum of local particle averages
\begin{align}
    \vspace{-1em}
    \label{eq:vfe_grad_estimator}
    \nabla_{\theta} \mathcal{F} &= \expect{q}{- \nabla_{\theta} \log \priorm{\theta}{\mathbf{x}, \mathbf{z}}} \approx -\sum_{v \in (\mathbf{x}, \mathbf{z})} \frac{1}{K} \sum_{k=1}^{K}\nabla_{\theta_{v}} \log \priorc{\theta_{v}}{v^{k}}{\parents{v}^{k}}.
    \vspace{-1em}
\end{align}
\end{theorem}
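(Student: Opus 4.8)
The plan is to chain two ingredients: the free-energy gradient identity already established as Theorem~\ref{prop:free_energy_gradient}, and the disjoint-factorization hypothesis on $\theta$. Theorem~\ref{prop:free_energy_gradient} delivers the first equality of Equation~\ref{eq:vfe_grad_estimator} directly: even though DCPC's importance weights depend on $\theta$ through the prediction-error parameterization of the proposal, through the complete-conditional normalizer estimates $\normalizerHat{\theta}{\mathbf{z}_{\setminus z}}$, and through the denominator $\prod_{z}\targetc{\theta}{z}{\mathbf{z}_{\setminus z}}$, those contributions cancel in expectation, leaving $\nabla_{\theta}\mathcal{F}(\theta,q) = \expect{q}{-\nabla_{\theta}\log\priorm{\theta}{\mathbf{x},\mathbf{z}}}$. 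Hence the only work specific to this theorem is to show that, under the added assumption $\Theta = \prod_{x\in\mathbf{x}}\Theta_x\times\prod_{z\in\mathbf{z}}\Theta_z$, this right-hand side splits into the stated sum of purely local particle averages.

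First I would expand the log-joint with the factorization of Equation~\ref{eq:factorization}, writing $\log\priorm{\theta}{\mathbf{x},\mathbf{z}} = \sum_{v\in(\mathbf{x},\mathbf{z})}\log\priorc{\theta_v}{v}{\parents{v}}$, where the disjoint product structure of $\Theta$ lets me index the parameters of each conditional by the variable $v$ it governs. Because the blocks $\Theta_v$ are disjoint, $\priorc{\theta_v}{v}{\parents{v}}$ is constant in $\theta_{v'}$ for $v'\neq v$, so $\nabla_{\theta_{v'}}\log\priorc{\theta_v}{v}{\parents{v}} = 0$; the gradient on the product parameter space therefore decomposes blockwise, $\nabla_{\theta}\log\priorm{\theta}{\mathbf{x},\mathbf{z}} = \bigoplus_{v\in(\mathbf{x},\mathbf{z})}\nabla_{\theta_v}\log\priorc{\theta_v}{v}{\parents{v}}$, which under the usual identification of each block with its embedding in $\Theta$ I write additively. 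Linearity of the expectation $\expect{q}{\cdot}$ over this finite sum then gives $\nabla_{\theta}\mathcal{F} = -\sum_{v\in(\mathbf{x},\mathbf{z})}\expect{q}{\nabla_{\theta_v}\log\priorc{\theta_v}{v}{\parents{v}}}$.

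Next I would replace each expectation under the DCPC proposal $q$ by its $K$-particle estimate: the resampling step of Algorithm~\ref{alg:ppc} leaves an equally-weighted population $\{\mathbf{z}^k\}_{k=1}^{K}$, so $\expect{q}{f(\mathbf{z})}$ is approximated by $\frac{1}{K}\sum_{k=1}^{K}f(\mathbf{z}^k)$; taking $f = \nabla_{\theta_v}\log\priorc{\theta_v}{\cdot}{\parents{v}}$ for each $v$ and collecting terms yields exactly Equation~\ref{eq:vfe_grad_estimator}. The resulting estimator is local: each summand $\nabla_{\theta_v}\log\priorc{\theta_v}{v^k}{\parents{v}^k}$ uses only the particle values at $v$ and at its parents together with the block $\theta_v$ --- it is the ``prior'' prediction-error term of Equation~\ref{eq:pe} differentiated with respect to $\theta_v$ rather than $z$ --- so the parameter update of Algorithm~\ref{alg:ppc} is carried out by the same spatially local signaling that produces the coordinate updates, which is the biological-plausibility claim.

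I expect the main obstacle --- everything else being bookkeeping --- to be stating and justifying the blockwise (direct-sum) decomposition of $\nabla_{\theta}$ carefully: one must check that the gradient on the product manifold $\Theta$ is the tuple of partial gradients, that cross-block derivatives vanish purely by disjointness of the $\Theta_v$, and that the observed conditionals $\priorc{\theta_x}{x}{\parents{x}}$ enter the sum alongside the latent ones (the free energy clamps $\mathbf{x}$, so both families contribute, matching the index set $v\in(\mathbf{x},\mathbf{z})$ in Equation~\ref{eq:vfe_grad_estimator}). To make the ``$\approx$'' fully rigorous one would additionally invoke consistency of the divide-and-conquer SMC estimator (via Theorem~\ref{thm:ppc_complete_conditionals} and Proposition~\ref{prop:elbo_bound}) so that the equally-weighted particle average converges to $\expect{q}{\cdot}$; for the statement as written this is the standard Monte Carlo approximation and needs no further argument.
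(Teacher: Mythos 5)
Your proposal is correct and follows essentially the same route as the paper's proof (Proposition~\ref{prop:ppc_local_likelihood}): invoke Proposition~\ref{prop:free_energy_gradient} for the identity $\nabla_{\theta}\mathcal{F} = \expect{q}{-\nabla_{\theta}\log\priorm{\theta}{\mathbf{x},\mathbf{z}}}$, expand the log-joint via Equation~\ref{eq:factorization}, use the disjointness of the $\Theta_v$ to localize each gradient to its own block, and replace the expectation under the particle cloud by the equally-weighted $K$-particle average. The only cosmetic difference is that the paper applies the particle approximation before the blockwise split of $\nabla_{\theta}$ whereas you do it after; the two operations commute, so the arguments coincide.
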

\begin{proof}
See Proposition~\ref{prop:ppc_local_likelihood} in Appendix~\ref{app:formal_theorems}. \vspace{-1em}
\end{proof}
Our practical implementation of DCPC, evaluated in the experiments above, takes advantage of Theorem~\ref{thm:ppc_local_likelihood} to save memory by detaching samples from the automatic differentiation graph in the forward ancestor-sampling pass through the generative model.

Finally, DCPC passes from local coordinate updates to the joint target density via an importance resampling operation, requiring that implementations synchronously transmit numerical densities or log-densities for the freshly proposed particle population. While phase-locking to a cortical oscillation may make this biologically feasible, resampling then requires normalizing the weights. Thankfully, divisive normalization appears ubiquitously throughout the brain~\citep{Carandini2012}, as well as just the type of ``winner-take-all'' circuit that implements a softmax function (e.g.~for normalizing and resampling importance weights) being ubiquitous in crosstalk between superficial and deep layers of the cortical column~\citep{Liu1999,Douglas2004}.


\vspace{-1em}
\section{Experiments}
\label{sec:experiments}
\vspace{-1em}
Divide-and-conquer predictive coding is not the first predictive coding algorithm to incorporate sampling into the inference process, and certainly not the first variational inference algorithm for structured graphical models. This section therefore evaluates DCPC's performance against both models from the predictive coding literature and against a standard deep generative model. Each experiment holds the generative model, dataset, and hyperparameters constant except where noted. 

We have implemented DCPC as a variational proposal or ``guide'' program in the deep probabilistic programming language Pyro~\citep{Bingham2019}; doing so enables us to compute free energy and prediction errors efficiently in graphical models involving neural networks. Since the experiments below involve minibatched subsampling of observations $\mathbf{x} \sim \mathcal{B}$ from a dataset $\mathcal{D} \sim p(\mathcal{D})$ of unknown distribution, we replace Equation~\ref{eq:vfe_grad_estimator} with a subsampled form (see \citet{Welling2011} for derivation) of the variational Sequential Monte Carlo gradient estimator~\citep{Naesseth2018}
\begin{align}
    \label{eq:vfe_subsampled_grad}
    \nabla_{\theta} \mathcal{F} &\approx |\mathcal{D}| \expect{\mathcal{B} \sim p(\mathcal{D})}{
        \frac{1}{|\mathcal{B}|} \sum_{\mathbf{x}^{b} \in \mathcal{B}} \expect{(\mathbf{z}, w)^{1:K} \sim q}{
           \log \left(\frac{1}{K} \sum_{k=1}^{K} w^{k} \right)
           \mid
           \mathbf{x}^{b}
        }
    }.
\end{align}
We optimized the free energy in all experiments using Adam~\citep{kingma2014adam}, making sure to call \texttt{detach()} after every Pyro \texttt{sample()} operation to implement the purely local gradient calculations of Theorem~\ref{thm:ppc_local_likelihood} and Equation~\ref{eq:vfe_subsampled_grad}. The first experiment below considers a hierarchical Gaussian model on three simple datasets. The model consists of two latent codes above an observation.



\paragraph{Deep latent Gaussian models with predictive coding}
\citet{Oliviers2024} brought together predictive coding with neural sampling hypotheses in a single model: Monte Carlo predictive coding (MCPC). Their inference algorithm functionally backpropagated the score function of a log-likelihood, applying Langevin proposals to sample latent variables from the posterior joint density along the way. They evaluated MCPC's performance on MNIST with a deep latent Gaussian model~\citep{Rezende2014} (DLGM). \href{https://github.com/gaspardol/MonteCarloPredictiveCoding}{Their model's} conditional densities consisted of nonlinearities followed by linear transformations to parameterize the mean of each Gaussian conditional, with learned covariances. Figure~\ref{fig:graphical-model-dlgm} shows that the DLGM structure already requires DCPC to respect hierarchical dependencies.

\begin{wrapfigure}{l}{0.20\textwidth}
    \begin{tikzpicture}[square/.style={regular polygon,regular polygon sides=4}]
        \node[obs] (x) {$x$};
        \node[latent, above=of x, xshift=-0.0cm] (z2) {$z_2$};
        \node[latent, above=of z2, xshift=-0.0cm] (z1) {$z_1$};
        \node[det, right=of z2](theta){$\theta$};
        \edge{z2} {x};
        \edge{z1} {z2};
        \edge{theta} {z1};
        \edge{theta} {z2};
        \edge{theta} {x};
        \plate[inner sep=.2cm] {pc} {(x)(z2)(z1)} {$N$};
    \end{tikzpicture}
    \caption{Hierarchical graphical model for DLGM's.}
    \label{fig:graphical-model-dlgm}
\end{wrapfigure}
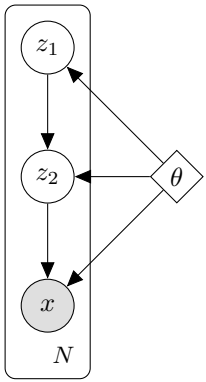

We tested DCPC's performance on elementary reconstruction and generation tasks by using it to train this exact generative model, changing only the likelihood from a discrete Bernoulli to a continuous Bernoulli~\citep{Loaiza2019}. After training we evaluated with a discrete Bernoulli likelihood. Table~\ref{tab:dlgm_results} shows that in terms of both surprise (negative log evidence, with the discrete Bernoulli likelihood) and mean squared reconstruction error, DCPC enjoys better average performance with a lower standard deviation of performance, the latter by an order of magnitude. All experiments used a learning rate $\eta=0.1$ and $K=4$ particles.

\begin{table}[t]
    \centering
    \begin{tabular}{lllll} 
        \hline
        Inference algorithm & Dataset & NLL $\downarrow$ & Mean Squared Error $\downarrow$ \\
        \hline
        MCPC & MNIST & $144.6 \pm 0.7$ & $(8.29 \pm 0.05) \times 10^{-2}$ \\
        DCPC & MNIST & $\mathbf{102.5} \pm \mathbf{0.01}$ & $\mathbf{0.01} \pm \mathbf{7.2 \times 10^{-6}}$ \\
        \hline
        DCPC & EMNIST & $160.8 \pm 0.05$ & $3.3 \times 10^{-6} \pm 3.5 \times 10^{-9}$ \\
        DCPC & Fashion MNIST & $284.1 \pm 0.05$ & $0.03 \pm 2.7 \times 10^{-5}$ \\
        \hline
    \end{tabular}
    \caption{Negative log-likelihood and mean squared error for MCPC against DCPC on held-out images from the MNISTs. Means and standard deviations are taken across five random seeds.}
    \label{tab:dlgm_results}
    \vspace{-2em}
\end{table}

Figure~\ref{fig:ppc_mnists_recons} shows an extension of this experiment to EMNIST~\citep{cohen2017emnist} and Fashion MNIST~\citep{xiao2017fashion} as well as the original MNIST, with ground-truth images in the top row and their reconstructions from DCPC-inferred latent codes below. The ground-truth images come from a 10\% validation split of each data-set, on which DCPC only infers particles $\proposalm{K=4}{\mathbf{z}}$.

The above datasets do not typically challenge a new inference algorithm. The next experiment will thus attempt to learn representations of color images, as in the widely-used variational autoencoder~\citep{kingma2013auto} framework, without an encoder network or amortized inference.

\begin{figure}[b]
    \vspace{-1em}
    \centering
    \begin{subfigure}[t]{0.3\textwidth}
        \includegraphics[width=\columnwidth]{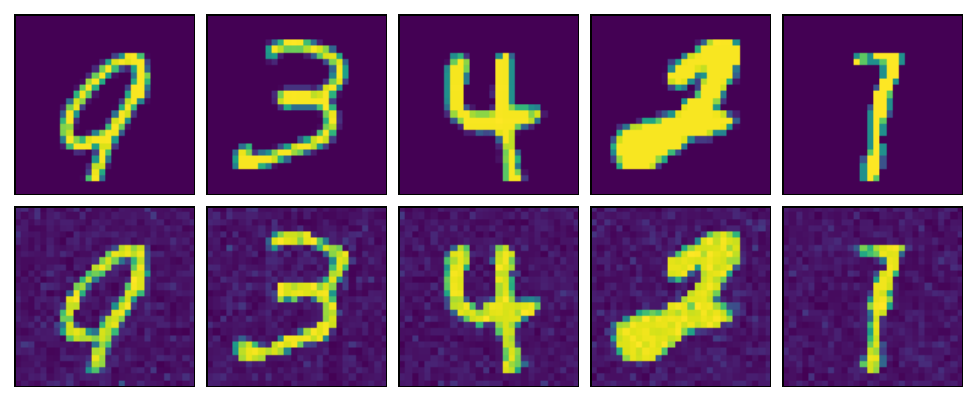}
        \caption{MNIST}
    \end{subfigure}
    \begin{subfigure}[t]{0.3\textwidth}
        \includegraphics[width=\columnwidth]{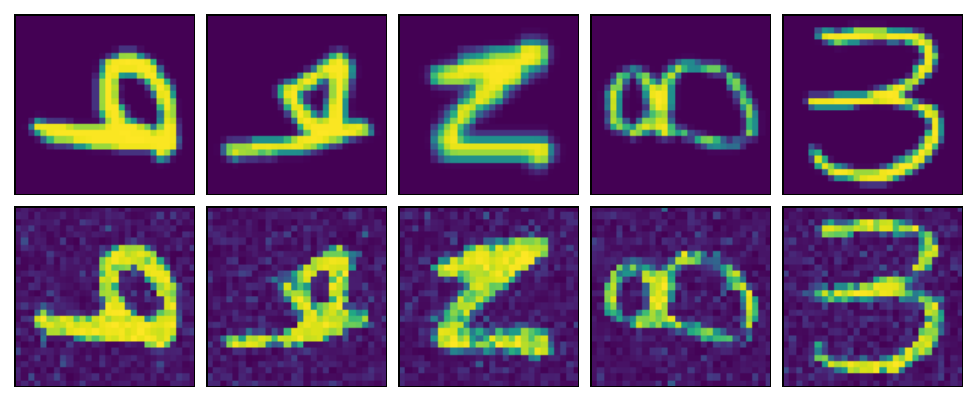}
        \caption{EMNIST}
    \end{subfigure}
    \begin{subfigure}[t]{0.3\textwidth}
    \includegraphics[width=\columnwidth]{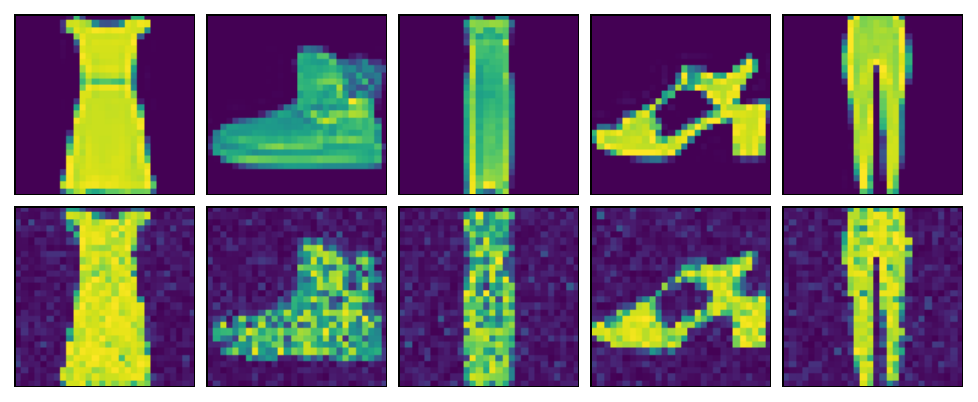}
        \caption{Fashion MNIST}
    \end{subfigure}
    \caption{\textbf{Top}: images from validation sets of MNIST (left), EMNIST (middle), and Fashion MNIST (right). \textbf{Bottom}: reconstructions by deep latent Gaussian models trained with DCPC for MNIST (left), EMNIST (middle), and Fashion MNIST (right), averaging over $K=4$ particles. DCPC achieves quality reconstructions by inference over $\mathbf{z}$ without training an inference network.}
    \label{fig:ppc_mnists_recons}
    \vspace{-1em}
\end{figure}

\paragraph{Image generation with representation learning}
\citet{Zahid2024} have also recently designed and evaluated Langevin predictive coding (LPC), with differences from both MCPC and DCPC. While MCPC sends prediction errors up through a hierarchical model, LPC computed as its prediction error the log-joint gradient for all latent variables in the generative model. This meant that biological plausibility, and their goal of amortizing predictive coding inference, restricted them to single-level decoder adapted from \citet{Higgins2017}. We evaluated with their discretized Gaussian likelihood, taken from \citet{Cheng2020,Ho2020}, learning the variance as in \citet{Rybkin2021}.

We compare DCPC to LPC using the Frechet Inception Distance (FID)~\citep{Seitzer2020} featured in \citet{Zahid2024}, holding constant the prior, neural network architecture, learning rate on $\theta$, and number of gradient evaluations used to train the parameters $\theta$ and latents $\mathbf{z}$. \citet{Zahid2024} evaluated a variety of scenarios and reported that their training could converge quickly when counted in epochs, but they accumulated gradients of $\theta$ over inference steps. We compare to the results they report after $15$ epochs with $300$ inference steps applied to latents initialized from the prior, equivalent to $15 \times 300 = 4500$ gradient steps on $\theta$ per batch, replicating their batch size of $64$. Since Algorithm~\ref{alg:ppc} updates $\theta$ only in its outer loop, we set $S=30$ and ran DCPC for $150$ epochs, with $\eta=0.001$ and $K=16$. Table~\ref{tab:celeba_results} shows that DCPC outperforms LPC in apples-to-apples generative quality, though not to the point of matching other model architectures\footnote{Such as vision Transformers, denoising diffusion models, etc.} by inference quality alone.

\begin{table}[t]
    \centering
    \begin{tabular}{l|c|l|l|l}
        Algorithm & Likelihood & Resolution $\uparrow$ & $S \times$ Epochs $\downarrow$ & FID $\downarrow$ \\
        \hline
        PGD & $\mathcal{N}$ & $32 \times 32$ & $1 \times 100$ &  $100 \pm 2.7$ \\
        DCPC (ours) & $\mathcal{N}$ & $32 \times 32$ & $1 \times 100$ & $\mathbf{82.7 \pm 0.9}$ \\
        \hline
        LPC & $\mathcal{DN}$ & $64 \times 64$ & $300 \times 15 = 4500$ & $120$ (approximate) \\
        VAE & $\mathcal{DN}$ & $64 \times 64$ & $1 \times 4500 = 4500$ & $86.3 \pm 0.03$ \\
        DCPC (ours) & $\mathcal{DN}$ & $64 \times 64$ & $30 \times 150 = 4500$ & $\mathbf{79.0} \pm 0.9$ \\
        \hline
    \end{tabular}
    \caption{FID score comparisons on the CelebA dataset~\citep{Liu2015}. The score for LPC comes from Figure 2 in \citet{Zahid2024}, where they ablated warm-starts and initialized from the prior.}
    \label{tab:celeba_results}
    \vspace{-2em}
\end{table}

Figure~\ref{fig:celeba} shows reconstructed images from the validation set (left) and samples from the posterior predictive generative model (right). There is blurriness in the reconstructions, as often occurs with variational autoencoders, but DCPC training allows the network to capture background color, hair color, direction in which a face is looking, and other visual properties.  Figure~\ref{fig:celeba_recons} shows reconstructions over the validation set, while Figure~\ref{fig:celeba_predictive} shows samples from the predictive distribution.

\citet{Kuntz2023} also reported an experiment on CelebA in terms of FID score, at the lower $32 \times 32$ resolution. Since they provided both source code and an exact mathematical description, we were able to run an exact, head-to-head comparison with PGD. The line in Table~\ref{tab:celeba_results} evaluating DCPC with PGD's example neural architecture at the $32 \times 32$ resolution (with similar particle count and learning rate) demonstrates a significant improvement in FID for DCPC, alongside reduced FID variance.


\begin{figure}
    \centering
    \begin{subfigure}{0.45\textwidth}
        \includegraphics[width=\columnwidth]{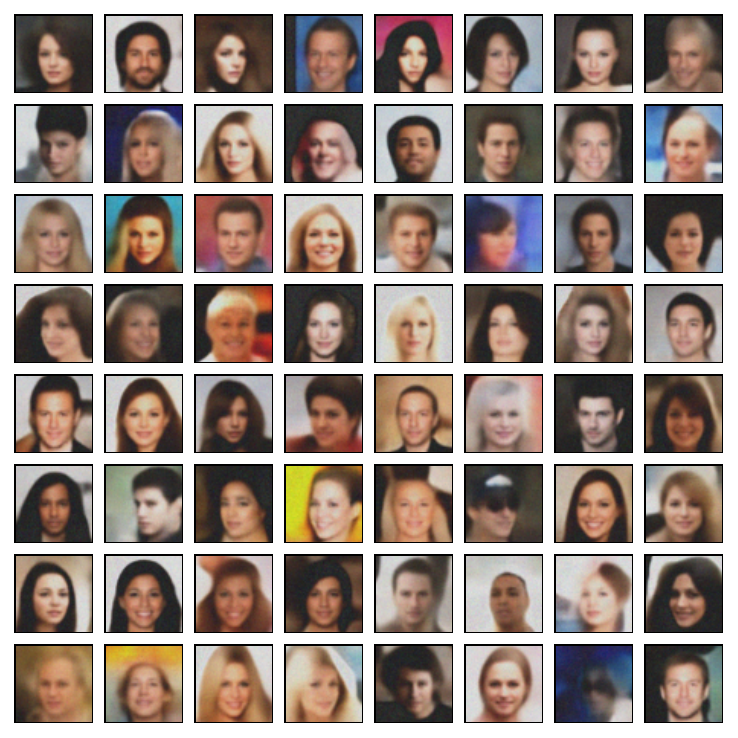}
        \caption{Reconstructions of the CelebA validation set by a generator network trained with DCPC.}
        \label{fig:celeba_recons}
    \end{subfigure}
    \begin{subfigure}{0.45\textwidth}
        \includegraphics[width=\columnwidth]{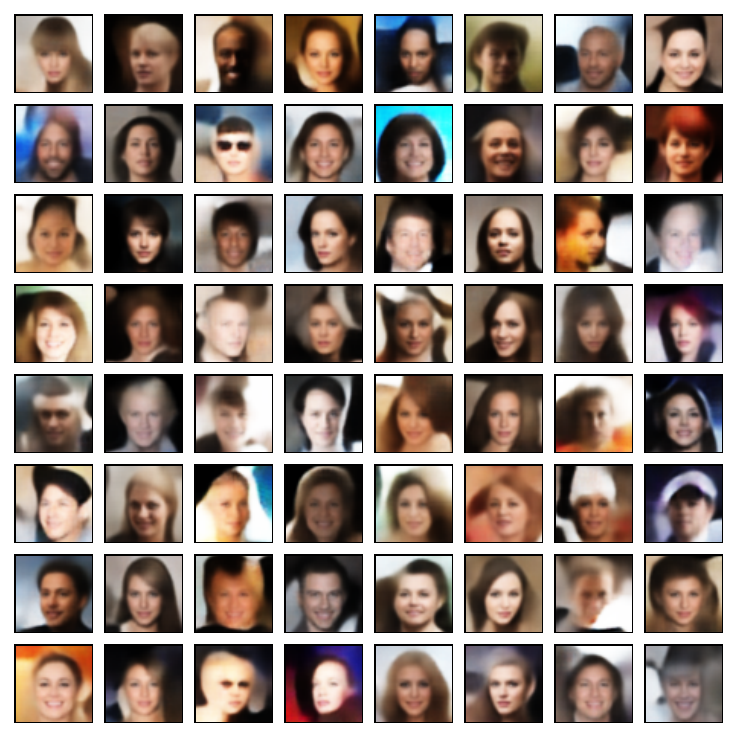}
        \caption{Samples drawn \emph{de novo} from the posterior predictive distribution of the trained network.}
        \label{fig:celeba_predictive}
    \end{subfigure}
    \caption{\textbf{Left}: reconstructions from the CelebA validation set. \textbf{Right}: samples from the generative model. DCPC achieves quality reconstructions by inference over $\mathbf{z}$ with $K=16$ particles and no inference network, while the learned generative model captures variation in the data.}
    \label{fig:celeba}
    \vspace{-1.5em}
\end{figure}

\paragraph{Necessary Compute Resources}
The initial DLGM experiments on the MNIST-alike datasets were performed on a desktop workstation with 128GB of RAM and an NVIDIA Quadro P4000 with 8GB of VRAM. Experiments on CelebA were conducted on an NVIDIA DGX equipped with eight (8) NVIDIA A100's, each with 80GB of VRAM. The latter compute infrastructure was also used for unpublished experiments, on several different datasets, in structured time-series modeling.

\vspace{-1em}
\section{Related Work}
\label{sec:related_work}
\vspace{-1em}

\citet{Pinchetti2022} expanded predictive coding beyond Gaussian generative models for the first time, applying the resulting algorithm to train variational autoencoders by variational inference and transformer architectures by maximum likelihood. DCPC, in turn, broadens predictive coding to target arbitrary probabilistic graphical models, following the broadening in \citet{Salvatori2022learning} to arbitrary deterministic computation graphs. DCPC follows on incremental predictive coding~\citep{Salvatori2024} in quickly alternating between updates to random variables and model parameters, giving an incremental EM algorithm~\citep{Neal1998}. Finally, \citet{Zahid2024} and \citet{Oliviers2024} also recognized the analogy between predictive coding's prediction errors and the score functions used in Langevin dynamics for continuous random variables.

There exists a large body of work on how neurobiologically plausible circuits could implement probabilistic inference. Classic work by \citet{Shi2009} provided a biologically plausible implementation of hierarchical inference via importance sampling; DCPC proceeds from importance sampling as a foundation, while parameterizing the proposal distribution via prediction errors. Recent work by \citet{Fang2022} studied neurally plausible algorithms for sampling-based inference with Langevin dynamics, though only for a Gaussian generative model of sparse coding. \citet{Golkar2022} imposed a whitening constraint on a Gaussian generative model for biological plausibility. Finally, \citet{Dong2023} and \citet{Zahid2024} both suggest mechanisms for employing momentum to reduce gradient noise in a biologically plausible sampling algorithm; the former intriguingly analogize their momentum term to neuronal adaptation. To conclude, other works have already implemented predictive coding models for image generation tasks, a notable example being the neural generative coding framework \cite{ororbia2022neural,ororbia2022convolutional}.

\vspace{-1em}
\section{Conclusion}
\label{sec:conclusion}
\vspace{-1em}

This paper proposed divide-and-conquer predictive coding (DCPC), an algorithm that efficiently and scalably approximates Gibbs samplers by importance sampling; DCPC parameterizes efficient proposals for a model's complete conditional densities using local prediction errors. Section~\ref{sec:bioplausibility} showed how Monte Carlo sampling can implement a form of ``prospective configuration''~\citep{Song2024}, first inferring a sample from the joint posterior density (Theorem~\ref{thm:ppc_complete_conditionals}) and then updating the generative model without a global backpropagation pass ( Theorem~\ref{thm:ppc_local_likelihood}). Experiments in Section~\ref{sec:experiments} showed that DCPC outperforms the state of the art Monte Carlo Predictive Coding from computational neuroscience, head-to-head, on the simple generative models typically considered in theoretical neuroscience; DCPC also outperforms the particle gradient descent algorithm of \citet{Kuntz2023} while under the constraint of purely local computation. DCPC's Langevin proposals admit the same extension to constrained sample spaces as applied in Hamiltonian Monte Carlo~\citep{Brubaker2012}; our Pyro implementation includes this extension via Pyro's preexisting support for HMC.

DCPC offers a number of ways forward. Particularly, this paper employed naive Langevin proposals, while \citet{Dong2023,Zahid2024} applied momentum-based preconditioning to take advantage of the target's geometry. \citet{Yin2006} demonstrated that gradient flows of this general kind can also provide more efficient samplers by breaking the detailed-balance condition necessary for the Metropolis-Hastings algorithm, motivating the choice of SMC over MCMC to correct proposal bias. 
Appendix~\ref{app:discrete_wasserstein} derives a mathematical background for an extension of DCPC to discrete random variables. Future work could follow \citet{Marino2018,Taniguchi2022} in using a neural network to iteratively map from particles and prediction errors to proposal parameters.

\vspace{-1em}
\subsection{Limitations}
\label{subsec:limitations}
\vspace{-1em}

DCPC's main limitations are its longer training time, and greater sensitivity to learning rates, than state-of-the-art amortized variational inference trained end-to-end. Such limitations occur frequently in the literature on neuroscience-inspired learning algorithms, as well as in the literature on particle-based algorithms with no parametric form.
This work has no singular ethical concerns specific only to DCPC, rather than the broader implications and responsibilities accompanying advancements in biologically plausible learning and Bayesian inference.


\begin{ack}
E.S. was supported by Vanderbilt Brain Institute Faculty Funds, as well as the Vanderbilt Data Science Institute, through PI Andre Bastos. Hamed Nejat produced the laminar circuitry figure in the supplementary material in conjunction with E.S. H.W. and T.S. were supported through VERSES.AI. E.S. would also like to thank Jan-Willem van de Meent, Karen Quigley, and Lisa Feldman Barrett for the training in approximate inference and predictive processing out of which this paper developed.
\end{ack}

\bibliography{neurips_2024}

\begin{thebibliography}{8}
\providecommand{\natexlab}[1]{#1}
\providecommand{\url}[1]{\texttt{#1}}
\expandafter\ifx\csname urlstyle\endcsname\relax
  \providecommand{\doi}[1]{doi: #1}\else
  \providecommand{\doi}{doi: \begingroup \urlstyle{rm}\Url}\fi

\bibitem[Chopin and Papaspiliopoulos(2020)]{Chopin2020}
Nicolas Chopin and Omiros Papaspiliopoulos.
\newblock \emph{An Introduction to Sequential Monte Carlo}.
\newblock Springer, 2020.
\newblock ISBN 978-3-030-47844-5.
\newblock \doi{10.1007/978-3-030-47845-2}.
\newblock Citation Key: Chopin2020 ISSN: 2197-568X.

\bibitem[Grathwohl et~al.(2021)Grathwohl, Swersky, Hashemi, Duvenaud, and Maddison]{Grathwohl2021}
Will Grathwohl, Kevin Swersky, Milad Hashemi, David Duvenaud, and Chris Maddison.
\newblock Oops {I} took a gradient: Scalable sampling for discrete distributions.
\newblock In \emph{Proceedings of the 38th International Conference on Machine Learning}, page 3831–3841. PMLR, July 2021.
\newblock URL \url{https://proceedings.mlr.press/v139/grathwohl21a.html}.

\bibitem[Schulman et~al.(2015)Schulman, Heess, Weber, and Abbeel]{Schulman2015}
John Schulman, Nicolas Heess, Theophane Weber, and Pieter Abbeel.
\newblock Gradient estimation using stochastic computation graphs.
\newblock In C.~Cortes, N.~Lawrence, D.~Lee, M.~Sugiyama, and R.~Garnett, editors, \emph{Advances in Neural Information Processing Systems}, volume~28. Curran Associates, Inc., 2015.
\newblock URL \url{https://proceedings.neurips.cc/paper_files/paper/2015/file/de03beffeed9da5f3639a621bcab5dd4-Paper.pdf}.

\bibitem[Sun et~al.(2023)Sun, Dai, Dai, Zhou, and Schuurmans]{Sun2023}
Haoran Sun, Hanjun Dai, Bo~Dai, Haomin Zhou, and Dale Schuurmans.
\newblock Discrete {Langevin} {Samplers} via {Wasserstein} {Gradient} {Flow}.
\newblock In \emph{Proceedings of the 26th {International} {Conference} on {Artificial} {Intelligence} and {Statistics}}, Valencia, Spain, April 2023. Proceedings of Machine Learning Research.

\bibitem[Titsias(2023)]{Titsias2023}
Michalis Titsias.
\newblock Optimal preconditioning and fisher adaptive langevin sampling.
\newblock In \emph{Advances in Neural Information Processing Systems}, volume~37. Curran Associates, Inc., 2023.
\newblock URL \url{https://proceedings.neurips.cc/paper_files/paper/2023/hash/5da6d5818a156791090c875abeca3cf8-Abstract-Conference.html}.

\bibitem[Wu et~al.(2020)Wu, Zimmermann, Sennesh, Le, and van~de Meent]{Wu2020}
Hao Wu, Heiko Zimmermann, Eli Sennesh, Tuan~Anh Le, and Jan~Willem van~de Meent.
\newblock {Amortized population Gibbs samplers with neural sufficient statistics}.
\newblock In \emph{Proceedings of the 37th International Conference on Machine Learning}, 2020.

\bibitem[Xiang et~al.(2023)Xiang, Zhu, Lei, Xu, and Zhang]{Xiang2023}
Yue Xiang, Dongyao Zhu, Bowen Lei, Dongkuan Xu, and Ruqi Zhang.
\newblock Efficient {Informed} {Proposals} for {Discrete} {Distributions} via {Newton}’s {Series} {Approximation}.
\newblock In \emph{Proceedings of {The} 26th {International} {Conference} on {Artificial} {Intelligence} and {Statistics}}, pages 7288--7310. PMLR, April 2023.
\newblock URL \url{https://proceedings.mlr.press/v206/xiang23a.html}.
\newblock ISSN: 2640-3498.

\bibitem[Zimmermann et~al.(2021)Zimmermann, Wu, Esmaeili, and van~de Meent]{Zimmermann2021}
Heiko Zimmermann, Hao Wu, Babak Esmaeili, and Jan-Willem van~de Meent.
\newblock Nested variational inference.
\newblock In M.~Ranzato, A.~Beygelzimer, Y.~Dauphin, P.S. Liang, and J.~Wortman Vaughan, editors, \emph{Advances in Neural Information Processing Systems}, volume~34, pages 20423--20435. Curran Associates, Inc., 2021.
\newblock URL \url{https://proceedings.neurips.cc/paper_files/paper/2021/file/ab49b208848abe14418090d95df0d590-Paper.pdf}.

\end{thebibliography}


\begin{thebibliography}{74}
\providecommand{\natexlab}[1]{#1}
\providecommand{\url}[1]{\texttt{#1}}
\expandafter\ifx\csname urlstyle\endcsname\relax
  \providecommand{\doi}[1]{doi: #1}\else
  \providecommand{\doi}{doi: \begingroup \urlstyle{rm}\Url}\fi

\bibitem[Adler and Alon(2018)]{Adler2018}
Miri Adler and Uri Alon.
\newblock Fold-change detection in biological systems.
\newblock \emph{Current Opinion in Systems Biology}, 8:\penalty0 81–89, April 2018.
\newblock ISSN 2452-3100.
\newblock \doi{10.1016/j.coisb.2017.12.005}.

\bibitem[Bastos et~al.(2015)Bastos, Vezoli, and Fries]{Bastos2015}
Andre~M Bastos, Julien Vezoli, and Pascal Fries.
\newblock Communication through coherence with inter-areal delays.
\newblock \emph{Current Opinion in Neurobiology}, 31:\penalty0 173–180, April 2015.
\newblock ISSN 09594388.
\newblock \doi{10.1016/j.conb.2014.11.001}.

\bibitem[Bastos et~al.(2012)Bastos, Usrey, Adams, Mangun, Fries, and Friston]{Bastos2012}
André~M. Bastos, W~Martin Usrey, Rick~A Adams, George~R Mangun, Pascal Fries, and Karl~J Friston.
\newblock Canonical microcircuits for predictive coding.
\newblock \emph{Neuron}, 76\penalty0 (4):\penalty0 695--711, 2012.

\bibitem[Bastos et~al.(2020)Bastos, Lundqvist, Waite, Kopell, and Miller]{Bastos2020}
André~M. Bastos, Mikael Lundqvist, Ayan~S. Waite, Nancy Kopell, and Earl~K. Miller.
\newblock Layer and rhythm specificity for predictive routing.
\newblock \emph{Proceedings of the National Academy of Sciences}, 117\penalty0 (49):\penalty0 31459–31469, December 2020.
\newblock ISSN 0027-8424, 1091-6490.
\newblock \doi{10.1073/pnas.2014868117}.

\bibitem[Bergmann et~al.(2024)Bergmann, Petro, Abbatecola, Li, Morgan, and Muckli]{Bergmann2024}
Johanna Bergmann, Lucy~S Petro, Clement Abbatecola, Min~S Li, A~Tyler Morgan, and Lars Muckli.
\newblock Cortical depth profiles in primary visual cortex for illusory and imaginary experiences.
\newblock \emph{Nature Communications}, 15\penalty0 (1):\penalty0 1002, 2024.

\bibitem[Bingham et~al.(2019)Bingham, Chen, Jankowiak, Obermeyer, Pradhan, Karaletsos, Singh, Szerlip, Horsfall, and Goodman]{Bingham2019}
Eli Bingham, Jonathan~P. Chen, Martin Jankowiak, Fritz Obermeyer, Neeraj Pradhan, Theofanis Karaletsos, Rohit Singh, Paul Szerlip, Paul Horsfall, and Noah~D. Goodman.
\newblock Pyro: Deep universal probabilistic programming.
\newblock \emph{Journal of Machine Learning Research}, 20\penalty0 (28):\penalty0 1--6, 2019.
\newblock URL \url{http://jmlr.org/papers/v20/18-403.html}.

\bibitem[Borba et~al.(2021)Borba, Kourakis, Schwennicke, Brasnic, and Smith]{Borba2021}
Cezar Borba, Matthew~J Kourakis, Shea Schwennicke, Lorena Brasnic, and William~C Smith.
\newblock Fold change detection in visual processing.
\newblock \emph{Frontiers in Neural Circuits}, 15:\penalty0 705161, 2021.

\bibitem[Brubaker et~al.(2012)Brubaker, Salzmann, and Urtasun]{Brubaker2012}
Marcus Brubaker, Mathieu Salzmann, and Raquel Urtasun.
\newblock A family of mcmc methods on implicitly defined manifolds.
\newblock In \emph{Artificial intelligence and statistics}, pages 161--172. PMLR, 2012.

\bibitem[Buzsáki and Draguhn(2004)]{Buzsaki2004}
György Buzsáki and Andreas Draguhn.
\newblock Neuronal oscillations in cortical networks.
\newblock \emph{Science}, 304\penalty0 (5679):\penalty0 1926–1929, June 2004.
\newblock ISSN 0036-8075, 1095-9203.
\newblock \doi{10.1126/science.1099745}.

\bibitem[Campagnola et~al.(2022)Campagnola, Seeman, Chartrand, Kim, Hoggarth, Gamlin, Ito, Trinh, Davoudian, Radaelli, Kim, Hage, Braun, Alfiler, Andrade, Bohn, Dalley, Henry, Kebede, Mukora, Sandman, Williams, Larsen, Teeter, Daigle, Berry, Dotson, Enstrom, Gorham, Hupp, Lee, Ngo, Nicovich, Potekhina, Ransford, Gary, Goldy, McMillen, Pham, Tieu, Siverts, Walker, Farrell, Schroedter, Slaughterbeck, Cobb, Ellenbogen, Gwinn, Keene, Ko, Ojemann, Silbergeld, Carey, Casper, Crichton, Clark, Dee, Ellingwood, Gloe, Kroll, Sulc, Tung, Wadhwani, Brouner, Egdorf, Maxwell, McGraw, Pom, Ruiz, Bomben, Feng, Hejazinia, Shi, Szafer, Wakeman, Phillips, Bernard, Esposito, D’Orazi, Sunkin, Smith, Tasic, Arkhipov, Sorensen, Lein, Koch, Murphy, Zeng, and Jarsky]{Campagnola2022}
Luke Campagnola, Stephanie~C. Seeman, Thomas Chartrand, Lisa Kim, Alex Hoggarth, Clare Gamlin, Shinya Ito, Jessica Trinh, Pasha Davoudian, Cristina Radaelli, Mean-Hwan Kim, Travis Hage, Thomas Braun, Lauren Alfiler, Julia Andrade, Phillip Bohn, Rachel Dalley, Alex Henry, Sara Kebede, Alice Mukora, David Sandman, Grace Williams, Rachael Larsen, Corinne Teeter, Tanya~L. Daigle, Kyla Berry, Nadia Dotson, Rachel Enstrom, Melissa Gorham, Madie Hupp, Samuel~Dingman Lee, Kiet Ngo, Philip~R. Nicovich, Lydia Potekhina, Shea Ransford, Amanda Gary, Jeff Goldy, Delissa McMillen, Trangthanh Pham, Michael Tieu, La’Akea Siverts, Miranda Walker, Colin Farrell, Martin Schroedter, Cliff Slaughterbeck, Charles Cobb, Richard Ellenbogen, Ryder~P. Gwinn, C.~Dirk Keene, Andrew~L. Ko, Jeffrey~G. Ojemann, Daniel~L. Silbergeld, Daniel Carey, Tamara Casper, Kirsten Crichton, Michael Clark, Nick Dee, Lauren Ellingwood, Jessica Gloe, Matthew Kroll, Josef Sulc, Herman Tung, Katherine Wadhwani, Krissy Brouner, Tom Egdorf, Michelle
  Maxwell, Medea McGraw, Christina~Alice Pom, Augustin Ruiz, Jasmine Bomben, David Feng, Nika Hejazinia, Shu Shi, Aaron Szafer, Wayne Wakeman, John Phillips, Amy Bernard, Luke Esposito, Florence~D. D’Orazi, Susan Sunkin, Kimberly Smith, Bosiljka Tasic, Anton Arkhipov, Staci Sorensen, Ed~Lein, Christof Koch, Gabe Murphy, Hongkui Zeng, and Tim Jarsky.
\newblock Local connectivity and synaptic dynamics in mouse and human neocortex.
\newblock \emph{Science}, 375\penalty0 (6585):\penalty0 eabj5861, 2022.
\newblock \doi{10.1126/science.abj5861}.
\newblock URL \url{https://www.science.org/doi/abs/10.1126/science.abj5861}.

\bibitem[Carandini and Heeger(2012)]{Carandini2012}
Matteo Carandini and David~J Heeger.
\newblock Normalization as a canonical neural computation.
\newblock \emph{Nature reviews neuroscience}, 13\penalty0 (1):\penalty0 51--62, 2012.

\bibitem[Chater et~al.(2006)Chater, Tenenbaum, and Yuille]{Chater2006}
Nick Chater, Joshua~B Tenenbaum, and Alan Yuille.
\newblock Probabilistic models of cognition: Conceptual foundations.
\newblock \emph{Trends in cognitive sciences}, 10\penalty0 (7):\penalty0 287--291, 2006.

\bibitem[Chavlis and Poirazi(2021)]{Chavlis2021}
Spyridon Chavlis and Panayiota Poirazi.
\newblock Drawing inspiration from biological dendrites to empower artificial neural networks.
\newblock \emph{Current Opinion in Neurobiology}, 70:\penalty0 1–10, October 2021.
\newblock ISSN 0959-4388.
\newblock \doi{10.1016/j.conb.2021.04.007}.

\bibitem[Cheng et~al.(2020)Cheng, Sun, Takeuchi, and Katto]{Cheng2020}
Zhengxue Cheng, Heming Sun, Masaru Takeuchi, and Jiro Katto.
\newblock Learned image compression with discretized gaussian mixture likelihoods and attention modules.
\newblock In \emph{Proceedings of the 2020 IEEE/CVF Conference on Computer Vision and Pattern Recognition (CVPR)}, 2020.

\bibitem[Cohen et~al.(2017)Cohen, Afshar, Tapson, and Van~Schaik]{cohen2017emnist}
Gregory Cohen, Saeed Afshar, Jonathan Tapson, and Andre Van~Schaik.
\newblock Emnist: Extending mnist to handwritten letters.
\newblock In \emph{2017 international joint conference on neural networks (IJCNN)}, pages 2921--2926. IEEE, 2017.

\bibitem[Dasgupta et~al.(2020)Dasgupta, Schulz, Tenenbaum, and Gershman]{Dasgupta2020}
Ishita Dasgupta, Eric Schulz, Joshua~B Tenenbaum, and Samuel~J Gershman.
\newblock A theory of learning to infer.
\newblock \emph{Psychological review}, 127\penalty0 (3):\penalty0 412, 2020.

\bibitem[Dong and Wu(2023)]{Dong2023}
Xingsi Dong and Si~Wu.
\newblock Neural {Sampling} in {Hierarchical} {Exponential}-family {Energy}-based {Models}.
\newblock In \emph{Advances in {Neural} {Information} {Processing} {Systems}}, New Orleans, LA, 2023. Curran Associates Inc.

\bibitem[Douglas and Martin(2004)]{Douglas2004}
Rodney~J Douglas and Kevan~AC Martin.
\newblock Neuronal circuits of the neocortex.
\newblock \emph{Annu. Rev. Neurosci.}, 27\penalty0 (1):\penalty0 419--451, 2004.

\bibitem[Fang et~al.(2022)Fang, Mudigonda, Zarcone, Khosrowshahi, and Olshausen]{Fang2022}
Michael Y-S Fang, Mayur Mudigonda, Ryan Zarcone, Amir Khosrowshahi, and Bruno~A Olshausen.
\newblock Learning and inference in sparse coding models with langevin dynamics.
\newblock \emph{Neural Computation}, 34\penalty0 (8):\penalty0 1676--1700, 2022.

\bibitem[Forkosh(2022)]{Forkosh2022}
Oren Forkosh.
\newblock Memoryless optimality: Neurons do not need adaptation to optimally encode stimuli with arbitrarily complex statistics.
\newblock \emph{Neural Computation}, 34\penalty0 (12):\penalty0 2374–2387, November 2022.
\newblock ISSN 0899-7667, 1530-888X.
\newblock \doi{10.1162/neco_a_01543}.

\bibitem[Fries(2015)]{Fries2015}
Pascal Fries.
\newblock Rhythms for cognition: Communication through coherence.
\newblock \emph{Neuron}, 88\penalty0 (1):\penalty0 220–235, October 2015.
\newblock ISSN 0896-6273.
\newblock \doi{10.1016/j.neuron.2015.09.034}.

\bibitem[Friston(2005)]{Friston2005}
Karl Friston.
\newblock A theory of cortical responses.
\newblock \emph{Philosophical transactions of the Royal Society B: Biological sciences}, 360\penalty0 (1456):\penalty0 815--836, 2005.

\bibitem[Gillon et~al.(2023)Gillon, Pina, Lecoq, Ahmed, Billeh, Caldejon, Groblewski, Henley, Kato, Lee, Luviano, Mace, Nayan, Nguyen, North, Perkins, Seid, Valley, Williford, Bengio, Lillicrap, Richards, and Zylberberg]{Gillon2023}
Colleen~J. Gillon, Jason~E. Pina, J{\'e}r{\^o}me~A. Lecoq, Ruweida Ahmed, Yazan~N. Billeh, Shiella Caldejon, Peter Groblewski, Timothy~M. Henley, India Kato, Eric Lee, Jennifer Luviano, Kyla Mace, Chelsea Nayan, Thuyanh~V. Nguyen, Kat North, Jed Perkins, Sam Seid, Matthew~T. Valley, Ali Williford, Yoshua Bengio, Timothy~P. Lillicrap, Blake~A. Richards, and Joel Zylberberg.
\newblock Learning from unexpected events in the neocortical microcircuit.
\newblock \emph{bioRxiv}, 2023.
\newblock \doi{10.1101/2021.01.15.426915}.
\newblock URL \url{https://www.biorxiv.org/content/early/2023/04/06/2021.01.15.426915}.

\bibitem[Golkar et~al.(2022)Golkar, Tesileanu, Bahroun, Sengupta, and Chklovskii]{Golkar2022}
Siavash Golkar, Tiberiu Tesileanu, Yanis Bahroun, Anirvan Sengupta, and Dmitri Chklovskii.
\newblock Constrained predictive coding as a biologically plausible model of the cortical hierarchy.
\newblock In S.~Koyejo, S.~Mohamed, A.~Agarwal, D.~Belgrave, K.~Cho, and A.~Oh, editors, \emph{Advances in Neural Information Processing Systems}, volume~35, pages 14155--14169. Curran Associates, Inc., 2022.
\newblock URL \url{https://proceedings.neurips.cc/paper_files/paper/2022/file/5b5de8526aac159e37ff9547713677ed-Paper-Conference.pdf}.

\bibitem[Gonzalez et~al.(2011)Gonzalez, Low, Gretton, and Guestrin]{Gonzalez2011}
Joseph Gonzalez, Yucheng Low, Arthur Gretton, and Carlos Guestrin.
\newblock Parallel gibbs sampling: From colored fields to thin junction trees.
\newblock In Geoffrey Gordon, David Dunson, and Miroslav Dudík, editors, \emph{Proceedings of the Fourteenth International Conference on Artificial Intelligence and Statistics}, volume~15 of \emph{Proceedings of Machine Learning Research}, pages 324--332, Fort Lauderdale, FL, USA, 11--13 Apr 2011. PMLR.
\newblock URL \url{https://proceedings.mlr.press/v15/gonzalez11a.html}.

\bibitem[Higgins et~al.(2017)Higgins, Matthey, Pal, Burgess, Glorot, Botvinick, Mohamed, and Lerchner]{Higgins2017}
Irina Higgins, Loic Matthey, Arka Pal, Christopher Burgess, Xavier Glorot, Matthew Botvinick, Shakir Mohamed, and Alexander Lerchner.
\newblock beta-{VAE}: Learning basic visual concepts with a constrained variational framework.
\newblock In \emph{International Conference on Learning Representations}, 2017.
\newblock URL \url{https://openreview.net/forum?id=Sy2fzU9gl}.

\bibitem[Ho et~al.(2020)Ho, Jain, and Abbeel]{Ho2020}
Jonathan Ho, Ajay Jain, and Pieter Abbeel.
\newblock Denoising diffusion probabilistic models.
\newblock In \emph{Advances in Neural Information Processing Systems}, volume~33, page 6840–6851. Curran Associates, Inc., 2020.
\newblock URL \url{https://proceedings.neurips.cc/paper/2020/hash/4c5bcfec8584af0d967f1ab10179ca4b-Abstract.html}.

\bibitem[Hutchinson and Barrett(2019)]{Hutchinson2019}
J.~Benjamin Hutchinson and Lisa~Feldman Barrett.
\newblock {The Power of Predictions: An Emerging Paradigm for Psychological Research}.
\newblock \emph{Current Directions in Psychological Science}, 2019.
\newblock ISSN 14678721.
\newblock \doi{10.1177/0963721419831992}.

\bibitem[Kingma and Ba(2014)]{kingma2014adam}
Diederik~P Kingma and Jimmy Ba.
\newblock Adam: A method for stochastic optimization.
\newblock \emph{arXiv preprint arXiv:1412.6980}, 2014.

\bibitem[Kingma and Welling(2013)]{kingma2013auto}
Diederik~P Kingma and Max Welling.
\newblock Auto-encoding variational bayes.
\newblock \emph{arXiv preprint arXiv:1312.6114}, 2013.

\bibitem[Kuntz et~al.(2023)Kuntz, Lim, and Johansen]{Kuntz2023}
Juan Kuntz, Jen~Ning Lim, and Adam~M Johansen.
\newblock Particle algorithms for maximum likelihood training of latent variable models.
\newblock In \emph{Proceedings of the 26th {International} {Conference} on {Artificial} {Intelligence} and {Statistics}}, volume 206, Valencia, Spain, April 2023. Proceedings of Machine Learning Research.

\bibitem[Kuntz et~al.(2024)Kuntz, Crucinio, and Johansen]{Kuntz2024}
Juan Kuntz, Francesca~R. Crucinio, and Adam~M. Johansen.
\newblock {The divide-and-conquer sequential Monte Carlo algorithm: Theoretical properties and limit theorems}.
\newblock \emph{The Annals of Applied Probability}, 34\penalty0 (1B):\penalty0 1469 -- 1523, 2024.
\newblock \doi{10.1214/23-AAP1996}.
\newblock URL \url{https://doi.org/10.1214/23-AAP1996}.

\bibitem[Lake et~al.(2017)Lake, Ullman, Tenenbaum, and Gershman]{Lake2017}
Brenden~M. Lake, Tomer~D. Ullman, Joshua~B. Tenenbaum, and Samuel~J. Gershman.
\newblock Building machines that learn and think like people.
\newblock \emph{Behavioral and Brain Sciences}, 40:\penalty0 e253, 2017.
\newblock \doi{10.1017/S0140525X16001837}.

\bibitem[Lecun et~al.(2015)Lecun, Bengio, and Hinton]{Lecun2015}
Yann Lecun, Yoshua Bengio, and Geoffrey Hinton.
\newblock Deep learning.
\newblock \emph{Nature}, 521\penalty0 (7553):\penalty0 436–444, 2015.
\newblock ISSN 14764687.
\newblock \doi{10.1038/nature14539}.
\newblock Citation Key: Lecun2015.

\bibitem[Lindsten et~al.(2017)Lindsten, Johansen, Naesseth, Kirkpatrick, Schön, Aston, and Bouchard-Côté]{Lindsten2017}
F.~Lindsten, A.~M. Johansen, C.~A. Naesseth, B.~Kirkpatrick, T.~B. Schön, J.~A.D. Aston, and A.~Bouchard-Côté.
\newblock Divide-and-conquer with sequential monte carlo.
\newblock \emph{Journal of Computational and Graphical Statistics}, 26\penalty0 (2):\penalty0 445–458, 2017.
\newblock ISSN 15372715.
\newblock \doi{10.1080/10618600.2016.1237363}.
\newblock arXiv: 1406.4993 Citation Key: Lindsten2017.

\bibitem[Liu(1999)]{Liu1999}
Shih-Chii Liu.
\newblock A winner-take-all circuit with controllable soft max property.
\newblock In S.~Solla, T.~Leen, and K.~M\"{u}ller, editors, \emph{Advances in Neural Information Processing Systems}, volume~12. MIT Press, 1999.
\newblock URL \url{https://proceedings.neurips.cc/paper_files/paper/1999/file/3e7e0224018ab3cf51abb96464d518cd-Paper.pdf}.

\bibitem[Liu et~al.(2015)Liu, Luo, Wang, and Tang]{Liu2015}
Ziwei Liu, Ping Luo, Xiaogang Wang, and Xiaoou Tang.
\newblock Deep learning face attributes in the wild.
\newblock In \emph{Proceedings of International Conference on Computer Vision (ICCV)}, December 2015.

\bibitem[Loaiza-Ganem and Cunningham(2019)]{Loaiza2019}
Gabriel Loaiza-Ganem and John~P Cunningham.
\newblock The continuous bernoulli: fixing a pervasive error in variational autoencoders.
\newblock In \emph{Advances in Neural Information Processing Systems}, volume~32. Curran Associates, Inc., 2019.
\newblock URL \url{https://proceedings.neurips.cc/paper/2019/hash/f82798ec8909d23e55679ee26bb26437-Abstract.html}.

\bibitem[Lundstrom et~al.(2008)Lundstrom, Higgs, Spain, and Fairhall]{Lundstrom2008}
Brian~N. Lundstrom, Matthew~H. Higgs, William~J. Spain, and Adrienne~L. Fairhall.
\newblock Fractional differentiation by neocortical pyramidal neurons.
\newblock \emph{Nature Neuroscience}, 11\penalty0 (11):\penalty0 1335–1342, November 2008.
\newblock ISSN 1546-1726.
\newblock \doi{10.1038/nn.2212}.

\bibitem[Mainen and Sejnowski(1995)]{Mainen1995}
Zachary~F Mainen and Terrence~J Sejnowski.
\newblock Reliability of spike timing in neocortical neurons.
\newblock \emph{Science}, 268\penalty0 (5216):\penalty0 1503--1506, 1995.

\bibitem[Marino et~al.(2018)Marino, Yue, and Mandt]{Marino2018}
Joseph Marino, Yisong Yue, and Stephan Mandt.
\newblock Iterative amortized inference.
\newblock In \emph{35th International Conference on Machine Learning, ICML 2018}, volume~8, page 5444–5462, 2018.
\newblock ISBN 978-1-5108-6796-3.
\newblock arXiv: 1807.09356 Citation Key: Marino2018a.

\bibitem[Millidge et~al.(2021)Millidge, Seth, and Buckley]{Millidge2021}
Beren Millidge, Anil Seth, and Christopher~L Buckley.
\newblock Predictive coding: a theoretical and experimental review.
\newblock \emph{arXiv preprint arXiv:2107.12979}, 2021.

\bibitem[Millidge et~al.(2023)Millidge, Song, Salvatori, Lukasiewicz, and Bogacz]{Millidge2023}
Beren Millidge, Yuhang Song, Tommaso Salvatori, Thomas Lukasiewicz, and Rafal Bogacz.
\newblock A theoretical framework for inference and learning in predictive coding networks.
\newblock In \emph{The Eleventh International Conference on Learning Representations}, 2023.
\newblock URL \url{https://openreview.net/forum?id=ZCTvSF_uVM4}.

\bibitem[Moldwin et~al.(2021)Moldwin, Kalmenson, and Segev]{Moldwin2021}
Toviah Moldwin, Menachem Kalmenson, and Idan Segev.
\newblock The gradient clusteron: A model neuron that learns to solve classification tasks via dendritic nonlinearities, structural plasticity, and gradient descent.
\newblock \emph{PLOS Computational Biology}, 17\penalty0 (5):\penalty0 e1009015, May 2021.
\newblock ISSN 1553-7358.
\newblock \doi{10.1371/journal.pcbi.1009015}.

\bibitem[Naesseth et~al.(2015)Naesseth, Lindsten, and Schon]{Naesseth2015}
Christian Naesseth, Fredrik Lindsten, and Thomas Schon.
\newblock Nested sequential monte carlo methods.
\newblock In Francis Bach and David Blei, editors, \emph{Proceedings of the 32nd International Conference on Machine Learning}, volume~37 of \emph{Proceedings of Machine Learning Research}, pages 1292--1301, Lille, France, 07--09 Jul 2015. PMLR.
\newblock URL \url{https://proceedings.mlr.press/v37/naesseth15.html}.

\bibitem[Naesseth et~al.(2018)Naesseth, Linderman, Ranganath, and Blei]{Naesseth2018}
Christian Naesseth, Scott Linderman, Rajesh Ranganath, and David Blei.
\newblock Variational sequential monte carlo.
\newblock In Amos Storkey and Fernando Perez-Cruz, editors, \emph{Proceedings of the Twenty-First International Conference on Artificial Intelligence and Statistics}, volume~84 of \emph{Proceedings of Machine Learning Research}, pages 968--977. PMLR, 09--11 Apr 2018.
\newblock URL \url{https://proceedings.mlr.press/v84/naesseth18a.html}.

\bibitem[Neal and Hinton(1998)]{Neal1998}
Radford~M. Neal and Geoffrey~E. Hinton.
\newblock \emph{A View of the EM Algorithm that Justifies Incremental, Sparse, and other Variants}, page 355–368.
\newblock NATO ASI Series. Springer Netherlands, Dordrecht, 1998.
\newblock ISBN 978-94-011-5014-9.
\newblock \doi{10.1007/978-94-011-5014-9_12}.
\newblock URL \url{https://doi.org/10.1007/978-94-011-5014-9_12}.

\bibitem[Oliviers et~al.(2024)Oliviers, Bogacz, and Meulemans]{Oliviers2024}
Gaspard Oliviers, Rafal Bogacz, and Alexander Meulemans.
\newblock Learning probability distributions of sensory inputs with monte carlo predictive coding.
\newblock \emph{bioRxiv}, 2024.

\bibitem[Ororbia and Kifer(2022)]{ororbia2022neural}
Alexander Ororbia and Daniel Kifer.
\newblock The neural coding framework for learning generative models.
\newblock \emph{Nature communications}, 13\penalty0 (1):\penalty0 2064, 2022.

\bibitem[Ororbia and Mali(2022)]{ororbia2022convolutional}
Alexander Ororbia and Ankur Mali.
\newblock Convolutional neural generative coding: Scaling predictive coding to natural images.
\newblock \emph{arXiv preprint arXiv:2211.12047}, 2022.

\bibitem[Peters et~al.(2024)Peters, DiCarlo, Gureckis, Haefner, Isik, Tenenbaum, Konkle, Naselaris, Stachenfeld, Tavares, et~al.]{Peters2024}
Benjamin Peters, James~J DiCarlo, Todd Gureckis, Ralf Haefner, Leyla Isik, Joshua Tenenbaum, Talia Konkle, Thomas Naselaris, Kimberly Stachenfeld, Zenna Tavares, et~al.
\newblock How does the primate brain combine generative and discriminative computations in vision?
\newblock \emph{ArXiv}, 2024.

\bibitem[Pinchetti et~al.(2022)Pinchetti, Salvatori, Yordanov, Millidge, Song, and Lukasiewicz]{Pinchetti2022}
Luca Pinchetti, Tommaso Salvatori, Yordan Yordanov, Beren Millidge, Yuhang Song, and Thomas Lukasiewicz.
\newblock Predictive coding beyond gaussian distributions.
\newblock In S.~Koyejo, S.~Mohamed, A.~Agarwal, D.~Belgrave, K.~Cho, and A.~Oh, editors, \emph{Advances in Neural Information Processing Systems}, volume~35, pages 1280--1293. Curran Associates, Inc., 2022.
\newblock URL \url{https://proceedings.neurips.cc/paper_files/paper/2022/file/08f9de0232c0b485110237f6e6cf88f1-Paper-Conference.pdf}.

\bibitem[Pouget et~al.(2013)Pouget, Beck, Ma, and Latham]{Pouget2013}
Alexandre Pouget, Jeffrey~M Beck, Wei~Ji Ma, and Peter~E Latham.
\newblock Probabilistic brains: knowns and unknowns.
\newblock \emph{Nature neuroscience}, 16\penalty0 (9):\penalty0 1170--1178, 2013.

\bibitem[Rao and Ballard(1999)]{Rao1999}
Rajesh P~N Rao and Dana~H Ballard.
\newblock {Predictive coding in the visual cortex: a functional interpretation of some extra-classical receptive-field effects}.
\newblock \emph{Nature neuroscience}, 2\penalty0 (1):\penalty0 79--87, 1999.
\newblock ISSN 1097-6256.
\newblock \doi{10.1038/4580}.
\newblock URL \url{10.1038/4580%5Cnhttp://www.nature.com/neuro/journal/v2/n1/abs/nn0199_79.html}.

\bibitem[Rezende et~al.(2014)Rezende, Mohamed, and Wierstra]{Rezende2014}
Danilo~Jimenez Rezende, Shakir Mohamed, and Daan Wierstra.
\newblock Stochastic backpropagation and approximate inference in deep generative models.
\newblock In \emph{Proceedings of the 31st International Conference on Machine Learning}, volume~4, page 3057–3070, Beijing, China, 2014.
\newblock arXiv: 1401.4082 Citation Key: Rezende2014 ISBN: 9781634393973.

\bibitem[Rumelhart et~al.(1986)Rumelhart, Hinton, and Williams]{Rumelhart1986}
David~E Rumelhart, Geoffrey~E Hinton, and Ronald~J Williams.
\newblock Learning representations by back-propagating errors.
\newblock \emph{Nature}, 323\penalty0 (6088):\penalty0 533--536, 1986.

\bibitem[Rybkin et~al.(2021)Rybkin, Daniilidis, and Levine]{Rybkin2021}
Oleh Rybkin, Kostas Daniilidis, and Sergey Levine.
\newblock Simple and effective vae training with calibrated decoders.
\newblock In \emph{International conference on machine learning}, pages 9179--9189. PMLR, 2021.

\bibitem[Salvatori et~al.(2022)Salvatori, Pinchetti, Millidge, Song, Bao, Bogacz, and Lukasiewicz]{Salvatori2022learning}
Tommaso Salvatori, Luca Pinchetti, Beren Millidge, Yuhang Song, Tianyi Bao, Rafal Bogacz, and Thomas Lukasiewicz.
\newblock Learning on arbitrary graph topologies via predictive coding.
\newblock \emph{Advances in neural information processing systems}, 35:\penalty0 38232--38244, 2022.

\bibitem[Salvatori et~al.(2023)Salvatori, Mali, Buckley, Lukasiewicz, Rao, Friston, and Ororbia]{Salvatori2023}
Tommaso Salvatori, Ankur Mali, Christopher~L Buckley, Thomas Lukasiewicz, Rajesh~PN Rao, Karl Friston, and Alexander Ororbia.
\newblock Brain-inspired computational intelligence via predictive coding.
\newblock \emph{arXiv preprint arXiv:2308.07870}, 2023.

\bibitem[Salvatori et~al.(2024)Salvatori, Song, Yordanov, Millidge, Emde, Xu, Sha, Bogacz, and Lukasiewicz]{Salvatori2024}
Tommaso Salvatori, Yuhang Song, Yordan Yordanov, Beren Millidge, Cornelius Emde, Zhenghua Xu, Lei Sha, Rafal Bogacz, and Thomas Lukasiewicz.
\newblock A stable, fast, and fully automatic learning algorithm for predictive coding networks.
\newblock In \emph{International Conference on Learning Representations}, 2024.

\bibitem[Schmidhuber(2015)]{Schmidhuber2015}
Jürgen Schmidhuber.
\newblock Deep learning in neural networks: An overview.
\newblock \emph{Neural Networks}, 61:\penalty0 85–117, 2015.
\newblock ISSN 18792782.
\newblock \doi{10.1016/j.neunet.2014.09.003}.
\newblock Citation Key: Schmidhuber2015.

\bibitem[Seitzer(2020)]{Seitzer2020}
Maximilian Seitzer.
\newblock {pytorch-fid: FID Score for PyTorch}.
\newblock \url{https://github.com/mseitzer/pytorch-fid}, August 2020.
\newblock Version 0.3.0.

\bibitem[Shi and Griffiths(2009)]{Shi2009}
Lei Shi and Thomas~L. Griffiths.
\newblock Neural implementation of hierarchical bayesian inference by importance sampling.
\newblock In \emph{Advances in Neural Information Processing Systems}, page 1669–1677, 2009.
\newblock ISBN 978-1-61567-911-9.
\newblock Citation Key: Shi2009.

\bibitem[Song et~al.(2024)Song, Millidge, Salvatori, Lukasiewicz, Xu, and Bogacz]{Song2024}
Yuhang Song, Beren Millidge, Tommaso Salvatori, Thomas Lukasiewicz, Zhenghua Xu, and Rafal Bogacz.
\newblock Inferring neural activity before plasticity as a foundation for learning beyond backpropagation.
\newblock \emph{Nature Neuroscience}, page 1–11, January 2024.
\newblock ISSN 1546-1726.
\newblock \doi{10.1038/s41593-023-01514-1}.

\bibitem[Spratling(2017)]{Spratling2017}
M.~W. Spratling.
\newblock {A review of predictive coding algorithms}.
\newblock \emph{Brain and Cognition}, 112:\penalty0 92--97, 2017.
\newblock ISSN 10902147.
\newblock \doi{10.1016/j.bandc.2015.11.003}.

\bibitem[Srinivasan et~al.(1982)Srinivasan, Laughlin, and Dubs]{Srinivasan1982}
M.~V. Srinivasan, S.~B. Laughlin, and A.~Dubs.
\newblock {Predictive coding: A fresh view of inhibition in the retina}.
\newblock \emph{Proceedings of the Royal Society of London - Biological Sciences}, 216\penalty0 (1205):\penalty0 427--459, 1982.
\newblock ISSN 09628452.
\newblock \doi{10.1098/rspb.1982.0085}.

\bibitem[Stites et~al.(2021)Stites, Zimmermann, Wu, Sennesh, and Van~de Meent]{Stites2021}
Sam Stites, Heiko Zimmermann, Hao Wu, Eli Sennesh, and Jan-Willem Van~de Meent.
\newblock Learning proposals for probabilistic programs with inference combinators.
\newblock \emph{37th Conference on Uncertainty in Artificial Intelligence (UAI 2021)}, 2021.

\bibitem[Taniguchi et~al.(2022)Taniguchi, Iwasawa, Kumagai, and Matsuo]{Taniguchi2022}
Shohei Taniguchi, Yusuke Iwasawa, Wataru Kumagai, and Yutaka Matsuo.
\newblock Langevin autoencoders for learning deep latent variable models.
\newblock \emph{Advances in Neural Information Processing Systems}, 35:\penalty0 13277--13289, 2022.
\newblock URL \url{https://proceedings.neurips.cc/paper_files/paper/2022/hash/565f995643da6329cec701f26f8579f5-Abstract-Conference.html}.

\bibitem[Webb et~al.(2018)Webb, Golinski, Zinkov, N, Rainforth, Teh, and Wood]{Webb2018}
Stefan Webb, Adam Golinski, Rob Zinkov, Siddharth N, Tom Rainforth, Yee~Whye Teh, and Frank Wood.
\newblock Faithful inversion of generative models for effective amortized inference.
\newblock In S.~Bengio, H.~Wallach, H.~Larochelle, K.~Grauman, N.~Cesa-Bianchi, and R.~Garnett, editors, \emph{Advances in Neural Information Processing Systems}, volume~31. Curran Associates, Inc., 2018.
\newblock URL \url{https://proceedings.neurips.cc/paper_files/paper/2018/file/894b77f805bd94d292574c38c5d628d5-Paper.pdf}.

\bibitem[Welling and Teh(2011)]{Welling2011}
Max Welling and Yee~Whye Teh.
\newblock Bayesian learning via stochastic gradient langevin dynamics.
\newblock In \emph{Proceedings of the 28th International Conference on Machine Learning, ICML 2011}, Bellevue, WA, USA, 2011. Proceedings of Machine Learning Research.

\bibitem[Whittington and Bogacz(2017)]{whittington2017approximation}
James~CR Whittington and Rafal Bogacz.
\newblock An approximation of the error backpropagation algorithm in a predictive coding network with local hebbian synaptic plasticity.
\newblock \emph{Neural computation}, 29\penalty0 (5):\penalty0 1229--1262, 2017.

\bibitem[Xiao et~al.(2017)Xiao, Rasul, and Vollgraf]{xiao2017fashion}
Han Xiao, Kashif Rasul, and Roland Vollgraf.
\newblock Fashion-mnist: a novel image dataset for benchmarking machine learning algorithms.
\newblock \emph{arXiv preprint arXiv:1708.07747}, 2017.

\bibitem[Yin and Ao(2006)]{Yin2006}
L.~Yin and P.~Ao.
\newblock Existence and construction of dynamical potential in nonequilibrium processes without detailed balance.
\newblock \emph{Journal of Physics A: Mathematical and General}, 39\penalty0 (27):\penalty0 8593, June 2006.
\newblock ISSN 0305-4470.
\newblock \doi{10.1088/0305-4470/39/27/003}.
\newblock URL \url{https://dx.doi.org/10.1088/0305-4470/39/27/003}.

\bibitem[Zahid et~al.(2024)Zahid, Guo, and Fountas]{Zahid2024}
Umais Zahid, Qinghai Guo, and Zafeirios Fountas.
\newblock Sample as you infer: Predictive coding with langevin dynamics.
\newblock In \emph{Proceedings of the 41st International Conference on Machine Learning}, volume 235, Vienna, Austria, 2024. Proceedings of Machine Learning Research.

\end{thebibliography}
\bibliographystyle{plainnat}


\appendix

\section{Further experiments and results}
\label{app:experiments}

\paragraph{Alternate image generation/ representation learning}
As indicated in Section~\ref{sec:background}, this paper builds upon the particle gradient descent (PGD) algorithm; \citet{Kuntz2023} demonstrated the algorithm's performance by training a generator network on CelebA. Their network employed a Gaussian likelihood with a fixed standard deviation of $0.01$, and evaluated a log-joint objective over 100 epochs on exactly 10,000 subsampled data points. The paper then evaluated mean squared error on an inpainting task and the Frechet Inception Distance over data images.

When applied to the resulting target density, DCPC amounts to PGD with a resampling step. Table~\ref{tab:celeba_vs_pgd} shows the results of training and evaluating the same model described above with DCPC. Since PGD trained for 100 epochs with a batch size of 128, albeit on a 10,000-image subsample of CelebA, we trained with the entire dataset for 100 epochs with batch-size 128.

\begin{table}[!h]
    \centering
    \begin{tabular}{lllll}
        Inference type & Log-joint & FID $\downarrow$ \\
        \hline
        VAE ($K=10$) & $-4.3 \times 10^{5}$ & $171.5 \pm 0.1$ \\
        PGD ($K=10$) & $-3.8 \times 10^{5}$ & $100 \pm 2.7$\\
        DCPC (ours, $K=10$) & $-1.8 \times 10^{6}$ & $\mathbf{82.7} \pm 0.9$ \\
        \hline
    \end{tabular}
    \caption{Log-joint probabilities and FID metrics show how DCPC performs against the original PGD.}
    \label{tab:celeba_vs_pgd}
    \vspace{-1em}
\end{table}
We suspect that the \href{https://github.com/juankuntz/ParEM/blob/727554d9e06f8311c203d4fb2b66f102d42e8ee0/torch/parem/algorithms.py#L303}{supplied code for log-joint calculation} averages over either particles or batch items differently from how we have evaluated DCPC (e.g.~we call \texttt{mean()} without dividing by any further shape dimensions), accounting for the apparent order-of-magnitude difference between log-joints.

At the request of reviewers, we have substituted a simplified Figure~\ref{fig:dcpc-structure} in the main text for Figure~\ref{fig:laminar_circuitry} showing how to map DCPC onto laminar microcircuit structure.

\begin{figure}
    \centering
    \includegraphics[width=0.8\columnwidth]{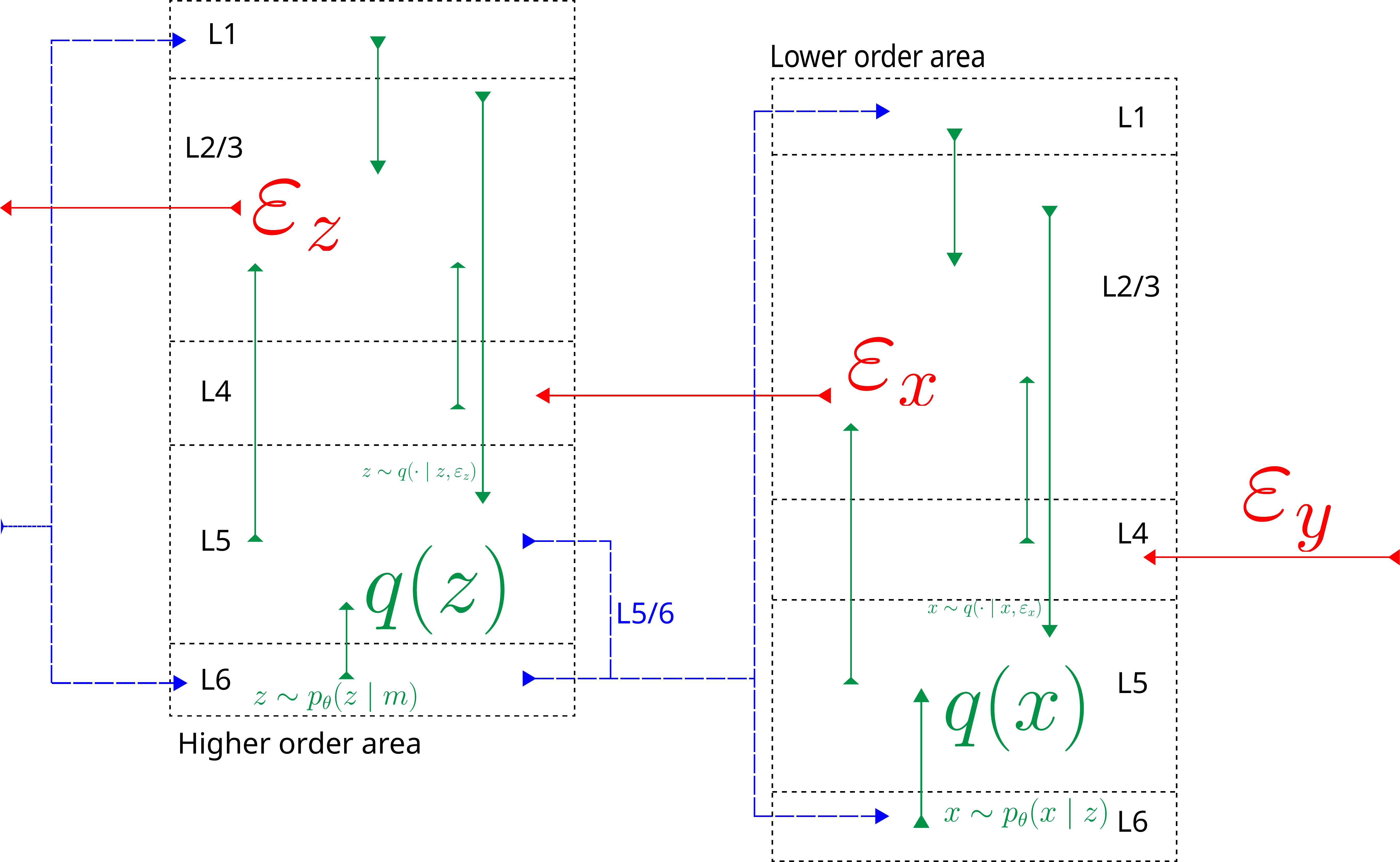}
    \caption{Divide-and-conquer predictive coding provides an algorithmic interpretation for some of the connections mapped in the canonical neocortical microcircuit~\citep{Bastos2012,Bastos2020,Campagnola2022}: prediction errors (red) arrive through ascending pathways into the central laminar layer 4, which transmits them up to layers 2/3 (green). These layers combine the incoming errors with a present posterior estimate (green L5$\rightarrow$ L2/3 connection) to generate prediction errors for the next cortical area. Eventually the updated predictions flow back down the cortical hierarchy (blue).}
    \label{fig:laminar_circuitry}
    \vspace{-1em}
\end{figure}

\section{Importance sampling and gradient estimation proofs}
\label{app:formal_theorems}

\citetapp{Titsias2023} introduced optimal estimators for preconditioning Langevin dynamics to adapt with the Fisher information of the target density. Definition~\ref{def:bayesian_fisher} gives the most basic estimator for that Fisher information, defined in terms of the score functions we use as prediction errors.
\begin{definition}[Bayesian Fisher estimator~\citepapp{Titsias2023}]
\label{def:bayesian_fisher}
Denoting by $\varepsilon_{z}$ the score function (from Equation~\ref{eq:pe}) and letting $\lambda > 0$ be a fixed hyperparameter, the \emph{Bayesian Fisher estimator}
\begin{align}
    \label{eq:bayesian_fisher}
    \Tilde{\mathcal{I}}_{K} &:= \expect{\varepsilon^{1:K}_{z} \sim \posteriorc{\theta}{z}{\mathbf{z}\setminus z}}{
        \left(\varepsilon_{z} \varepsilon^{\top}_{z}\right)^{k}
    }  + \frac{\lambda}{K} I
\end{align}
is an empirical estimator of a Bayesian target density's Fisher information (the target density from which the prediction errors were derived) based on a cloud of $K$ particles. When the particles are not yet distributed around a mode of the target (e.g.~the score function does not have an average of zero), substituting the empirical covariance for the expected outer product reduces the estimator's bias
\begin{align}
    \label{eq:bayesian_empirical_fisher}
    \Hat{\mathcal{I}}_{K}(\varepsilon^{1:K}_{z}) &:= \mathrm{Cov}\left(\varepsilon^{1:K}_{z}\right) + \frac{\lambda}{K} I.
\end{align}
\end{definition}
The above matrix does not describe the preconditioner that \citetapp{Titsias2023} actually recommended applying in a Langevin proposal. Definition~\ref{def:pc_fim} provides the fully normalized preconditioner.
\begin{definition}[Predictive coding Fisher preconditioner~\citepapp{Titsias2023}]
\label{def:pc_fim}
Using Definition~\ref{def:bayesian_fisher} to parameterize a preconditioner for the Langevin dynamics proposal, the \emph{predictive coding Fisher preconditioner} is the inverse of Equation~\ref{eq:bayesian_empirical_fisher}, normalized to have an average eigenvalue of 1
\begin{align}
    \label{eq:pc_fim}
    \Hat{\Sigma}_{\mathcal{I}}(\varepsilon^{1:K}_{z}) &:= \frac{
        \Hat{\mathcal{I}}_{K}(\varepsilon^{1:K}_{z})^{-1}
    }{
        \frac{1}{d} \mathrm{Tr}[\Hat{\mathcal{I}}_{K}(\varepsilon^{1:K}_{z})^{-1}]
    }.
\end{align}
\end{definition}

Definition~\ref{def:spw_density} generalizes the definition of importance sampling, suitable for recursively constructing sequential Monte Carlo algorithms with changing target densities.
\begin{definition}[Strict proper weighting for a density]
\label{def:spw_density}
Given an unnormalized density $\targetm{\theta}{\mathbf{z}}$ with corresponding normalizing constant $\normalizer{}{\theta}$ and normalized density $\posteriorm{\theta}{\mathbf{z}}$
\begin{align*}
    \normalizer{}{\theta} &:= \expectint{d\mathbf{z}}{\mathbf{z} \in \mathcal{Z}}{\targetm{\theta}{\mathbf{z}}} &
    \posteriorm{\theta}{\mathbf{z}} &:= \frac{\targetm{\theta}{\mathbf{z}}}{\normalizer{}{\theta}},
\end{align*}
the random variables $w, \mathbf{z} \sim \proposalm{}{w, \mathbf{z}}$ are \emph{strictly properly weighted}~\citep{Naesseth2015} with respect to $\targetm{\theta}{\mathbf{z}}$ if and only if for any measurable test function $h: \mathcal{Z} \rightarrow \mathbb{R}$, the weighted expectation over the proposal $\proposalm{}{w, \mathbf{z}}$ equals the expectation under the target $\targetm{\theta}{\mathbf{z}}$
\begin{align}
    \label{eq:spw_density}
    \expect{w, \mathbf{z} \sim q(w, \mathbf{z})}{w h(\mathbf{z})} &= \expectint{\targetm{\theta}{\mathbf{z}}\: d\mathbf{z}}{\mathbf{z} \in \mathcal{Z}}{h(\mathbf{z})}.
\end{align}
\end{definition}

The following two propositions come from the previous work by \citetapp{Wu2020}, \citet{Stites2021} and \citetapp{Zimmermann2021}. The reader looking for foundations can see \citet{Naesseth2015} or \citetapp{Chopin2020}.
\begin{proposition}[The free energy upper-bounds the surprisal]
\label{prop:elbo_bound}
Given a proposal $\proposalm{\phi}{w, \mathbf{z}}$ strictly properly weighted (Definition~\ref{def:spw_density}) for the target $\targetm{\theta}{\mathbf{z}}$, the variational free energy provides an upper bound to the target's surprisal
\begin{align}
    \label{eq:vfe_bound}
    \mathcal{F}(\theta, q) &\geq -\log \normalizer{}{\theta}.
\end{align}
\end{proposition}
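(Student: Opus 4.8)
The plan is to obtain the bound as a one-line consequence of strict proper weighting (Equation~\ref{eq:spw_density}) together with Jensen's inequality, without ever invoking a factorization or parametric form for the proposal. First I would instantiate the test function in Equation~\ref{eq:spw_density} with the constant map $h \equiv 1$, which is certainly measurable; the right-hand side then collapses to $\expectint{\targetm{\theta}{\mathbf{z}}\: d\mathbf{z}}{\mathbf{z} \in \mathcal{Z}}{} = \normalizer{}{\theta}$, yielding the key identity $\expect{w, \mathbf{z} \sim q(w, \mathbf{z})}{w} = \normalizer{}{\theta}$. In words: under any strictly-properly-weighted proposal the expected importance weight equals the target's normalizing constant, so the weight is an unbiased estimator of $\normalizer{}{\theta}$.

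Next I would expand the definition of the free energy, $\mathcal{F}(\theta, q) = \expect{w, \mathbf{z} \sim q(w, \mathbf{z})}{-\log w}$, and apply Jensen's inequality to the convex function $x \mapsto -\log x$ on $(0,\infty)$. Since $w$ is almost surely strictly positive (it is a genuine importance weight), Jensen gives $\expect{q}{-\log w} \geq -\log \expect{q}{w}$; substituting the identity from the first step produces $\mathcal{F}(\theta, q) \geq -\log \normalizer{}{\theta}$, which is exactly Equation~\ref{eq:vfe_bound}. The equality case ($w$ almost surely constant) is worth remarking on, as it corresponds to a zero-variance estimator, i.e.\ the proposal that exactly reproduces the target, recovering the familiar tightness condition for the evidence lower bound.

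The only points requiring care are mild integrability conditions, and this is where I would be most explicit rather than where I expect real difficulty: one needs $\normalizer{}{\theta} = \expect{q}{w} < \infty$ for the right-hand side to be finite, which holds because $\targetm{\theta}{\mathbf{z}}$ is assumed to be a normalizable (proper) unnormalized density, and if $\expect{q}{-\log w} = +\infty$ the inequality is trivial, so assuming it finite costs nothing. Jensen's inequality in the form used needs only $w \geq 0$ almost surely together with a well-defined mean, both of which strict proper weighting supplies. The genuine "obstacle," such as it is, is conceptual rather than technical: making clear that Definition~\ref{def:spw_density} is precisely the hypothesis that turns the generic ELBO argument into a rigorous bound for particle-based proposals, so that the result transfers verbatim to the divide-and-conquer SMC proposal used by DCPC.
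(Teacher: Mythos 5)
Your proof is correct and follows essentially the same route as the paper's: apply Jensen's inequality to $\expect{q}{-\log w}$ to get $-\log\expect{q}{w}$, then use strict proper weighting with the constant test function $h \equiv 1$ to identify $\expect{q}{w}$ with $\normalizer{}{\theta}$. Your added remarks on integrability and the equality case are sound but not needed for the argument.
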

\begin{proof}
I begin by writing out the free energy (Equation~\ref{eq:vfe}) as an expectation of a negative logarithm
\begin{align*}
    \mathcal{F}(\theta, q) &= \expect{z, w \sim \proposalm{}{z, w}}{-\log w}.
\end{align*}
Jensen's Inequality allows moving the expectation into the negative logarithm by relaxing the definition of the variational free energy from an equality to an upper bound
\begin{align*}
    \mathcal{F}(\theta, q)
    &\geq
    -\log \expect{z, w \sim \proposalm{}{z, w}}{w}.
\end{align*}
Setting $h(z)=1$, strict proper weighting for an unnormalized density (Definition~\ref{def:spw_density}) says the expected weight will be the normalizing constant
\begin{align*}
    \expect{z, w \sim \proposalm{}{z, w}}{w}
    &=
    \normalizer{}{\theta}
\end{align*}
which I substitute back in to obtain the desired inequality $\mathcal{F}(\theta, q) \geq -\log \normalizer{}{\theta}$.
\end{proof}

\begin{proposition}[Weighted expectations approximate the normalized target up to a constant]
\label{prop:spw_expectations}
Given a proposal $\proposalm{\phi}{w, \mathbf{z}}$ strictly properly weighted (Definition~\ref{def:spw_density}) for the target $\targetm{\theta}{\mathbf{z}}$ and a measurable test function $h: \mathcal{Z} \rightarrow \mathbb{R}$, weighted expectations under the proposal equal the target's normalizing constant times the test function's expectation under the normalized target
\begin{align*}
    \expect{(w, \mathbf{z}) \sim \proposalm{\phi}{w, \mathbf{z}}}{w h(\mathbf{z})} &= Z(\theta) \expect{
        \mathbf{z} \sim \posteriorm{\theta}{\cdot}
    }{
        h(\mathbf{z})
    }.
\end{align*}
\end{proposition}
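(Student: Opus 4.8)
The plan is to obtain the identity directly from the definition of strict proper weighting (Definition~\ref{def:spw_density}), the only additional ingredient being the trivial decomposition of the unnormalized target into its normalizing constant and its normalized density. First I would instantiate the defining equation of strict proper weighting, Equation~\ref{eq:spw_density}, with the test function $h$ from the statement: since $\proposalm{\phi}{w, \mathbf{z}}$ is strictly properly weighted for $\targetm{\theta}{\mathbf{z}}$, the weighted expectation over the proposal equals the integral of $h$ against the \emph{unnormalized} target,
\begin{align*}
    \expect{(w, \mathbf{z}) \sim \proposalm{\phi}{w, \mathbf{z}}}{w h(\mathbf{z})} &= \expectint{\targetm{\theta}{\mathbf{z}}\: d\mathbf{z}}{\mathbf{z} \in \mathcal{Z}}{h(\mathbf{z})}.
\end{align*}

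Next I would substitute $\targetm{\theta}{\mathbf{z}} = \normalizer{}{\theta}\, \posteriorm{\theta}{\mathbf{z}}$, which is just the rearrangement of the definition $\posteriorm{\theta}{\mathbf{z}} := \targetm{\theta}{\mathbf{z}} / \normalizer{}{\theta}$ already recorded in Definition~\ref{def:spw_density}, and pull the scalar $\normalizer{}{\theta}$ out of the integral:
\begin{align*}
    \expectint{\targetm{\theta}{\mathbf{z}}\: d\mathbf{z}}{\mathbf{z} \in \mathcal{Z}}{h(\mathbf{z})} &= \normalizer{}{\theta} \expectint{\posteriorm{\theta}{\mathbf{z}}\: d\mathbf{z}}{\mathbf{z} \in \mathcal{Z}}{h(\mathbf{z})} = \normalizer{}{\theta} \expect{\mathbf{z} \sim \posteriorm{\theta}{\cdot}}{h(\mathbf{z})}.
\end{align*}
Chaining the two displays yields the claimed equality $\expect{(w, \mathbf{z}) \sim \proposalm{\phi}{w, \mathbf{z}}}{w h(\mathbf{z})} = \normalizer{}{\theta} \expect{\mathbf{z} \sim \posteriorm{\theta}{\cdot}}{h(\mathbf{z})}$.

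There is no real obstacle here: the argument is immediate once Definition~\ref{def:spw_density} is in hand, and it is essentially the same ``unfold the definition'' manipulation used to prove Proposition~\ref{prop:elbo_bound}. The only points worth a clause of care are that $\normalizer{}{\theta}$ is finite and strictly positive, so that $\posteriorm{\theta}{\cdot}$ is a genuine probability density and the scalar may legitimately be factored through the integral — this is part of the standing assumption that $\targetm{\theta}{\mathbf{z}}$ is a normalizable target — and that measurability of $h$ together with strict proper weighting makes both expectations well-defined, which is built into Definition~\ref{def:spw_density}. As a sanity check, specializing to $h \equiv 1$ recovers $\expect{w, \mathbf{z} \sim \proposalm{\phi}{w, \mathbf{z}}}{w} = \normalizer{}{\theta}$, exactly the fact used inside the proof of Proposition~\ref{prop:elbo_bound}.
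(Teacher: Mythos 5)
Your proof is correct and follows essentially the same route as the paper's: instantiate the defining equation of strict proper weighting with $h$, substitute $\targetm{\theta}{\mathbf{z}} = \normalizer{}{\theta}\,\posteriorm{\theta}{\mathbf{z}}$, and factor the constant out of the integral. The added remarks on finiteness and positivity of $\normalizer{}{\theta}$ are sensible but not something the paper bothers to state.
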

\begin{proof}
Strict proper weighting (Equation~\ref{eq:spw_density}) states that weighted expectations under the proposal equal integrals over the unnormalized target, and by definition the normalized target equals the unnormalized density over its normalizing constant
\begin{align*}
    \expect{w, \mathbf{z} \sim q(w, \mathbf{z})}{w h(\mathbf{z})}
    &=
    \expectint{\targetm{\theta}{\mathbf{z}}\: d\mathbf{z}}{\mathbf{z} \in \mathcal{Z}}{h(\mathbf{z})},
    &
    \posteriorm{\theta}{\mathbf{z}} &:= \frac{\targetm{\theta}{\mathbf{z}}}{\normalizer{}{\theta}}.
\end{align*}
The second equation expresses the unnormalized target in terms of the normalized one
\begin{align*}
    \normalizer{}{\theta} \posteriorm{\theta}{\mathbf{z}} &= \targetm{\theta}{\mathbf{z}},
\end{align*}
and substituting this expression into the definition of strict proper weighting leads to the desired result
\begin{align*}
    \expectint{\targetm{\theta}{\mathbf{z}}\: d\mathbf{z}}{\mathbf{z} \in \mathcal{Z}}{h(\mathbf{z})}
    &=
    \expectint{\normalizer{}{\theta} \posteriorm{\theta}{\mathbf{z}}\: d\mathbf{z}}{\mathbf{z} \in \mathcal{Z}}{h(\mathbf{z})}, \\
    &= \normalizer{}{\theta} \expectint{\posteriorm{\theta}{\mathbf{z}}\: d\mathbf{z}}{\mathbf{z} \in \mathcal{Z}}{h(\mathbf{z})} \\
    \expect{w, \mathbf{z} \sim q(w, \mathbf{z})}{w h(\mathbf{z})} &= \normalizer{}{\theta} \expect{\posteriorm{\theta}{\mathbf{z}}}{h(\mathbf{z})} \qedhere.
\end{align*}
\end{proof}

\begin{proposition}[DCPC's free energy has a pathwise derivative]
\label{prop:free_energy_gradient}
The free energy $\mathcal{F}^{t+1} = \expect{q}{-\log w_{\theta^{t}}^{t+1}}$ constructed by the population predictive coding algorithm (Algorithm~\ref{alg:ppc}) has a pathwise derivative as the expectation of the negative gradient of the log-joint density
\begin{align*}
    \nabla_{\theta^{t}} \mathcal{F}^{t+1} &= \expect{q}{- \nabla_{\theta^{t}} \log \priorm{\theta^{t}}{\mathbf{x}, \mathbf{z}^{t+1}}}.
\end{align*}
\end{proposition}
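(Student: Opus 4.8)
The plan is to begin from the estimator $\mathcal{F}^{t+1} = \expect{q}{-\log w_{\theta^t}^{t+1}}$, substitute the target weight of Equation~\ref{eq:target_weight}, and split $-\log w_{\theta^t}^{t+1}$ into three groups of terms: the log-joint $-\log\priorm{\theta^t}{\mathbf{x},\mathbf{z}^{t+1}}$; the sum $\sum_{z\in\mathbf{z}}\log\targetc{\theta^t}{z^{t+1}}{\mathbf{z}_{\setminus z}}$ of log complete-conditional densities (Equation~\ref{eq:complete_conditional}) evaluated at the freshly resampled particles; and the sum $-\sum_{z\in\mathbf{z}}\log\normalizerHat{\theta^t}{\mathbf{z}_{\setminus z}}^{t+1}$ of log normalizer estimates. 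I would then take $\nabla_{\theta^t}$ pathwise, writing each Gaussian Langevin proposal as $z^{t+1} = z^{t} + \eta\hat{\Sigma}\varepsilon_z^{t+1} + \sqrt{2\eta}\,\hat{\Sigma}^{1/2}\xi$ with fixed noise $\xi$ and, as in the implementation remark following Theorem~\ref{thm:ppc_local_likelihood}, treating the proposed particle values and the prediction errors $\varepsilon_z^{t+1}$ as constants with respect to the learning gradient. The target is to show that the second and third groups cancel in expectation, so that only $\expect{q}{-\nabla_{\theta^t}\log\priorm{\theta^t}{\mathbf{x},\mathbf{z}^{t+1}}}$ survives.

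The core step is the cancellation between the complete-conditional factors and the normalizer estimates, carried out per coordinate $z$. Writing $\normalizerHat{\theta^t}{\mathbf{z}_{\setminus z}}^{t+1} = \frac{1}{K}\sum_{k} u_z^{t+1,k}$ with $u_z^{t+1,k} = \targetc{\theta^t}{\tilde z^k}{\mathbf{z}_{\setminus z}}/\proposalc{\eta}{\tilde z^k}{z^{t},\varepsilon_z^{t+1}}$ and the proposal denominator held constant, differentiating the logarithm yields $\nabla_{\theta^t}\log\normalizerHat{\theta^t}{\mathbf{z}_{\setminus z}}^{t+1} = \sum_{k}\bar u_z^{k}\,\nabla_{\theta^t}\log\targetc{\theta^t}{\tilde z^k}{\mathbf{z}_{\setminus z}}$, where the $\bar u_z^k$ are exactly the normalized weights the $\textsc{RESAMPLE}$ step uses to draw $z^{t+1}$. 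Hence the conditional expectation over that resampling draw satisfies $\expect{\mathrm{resample}}{\nabla_{\theta^t}\log\targetc{\theta^t}{z^{t+1}}{\mathbf{z}_{\setminus z}}} = \nabla_{\theta^t}\log\normalizerHat{\theta^t}{\mathbf{z}_{\setminus z}}^{t+1}$. Summing over $z\in\mathbf{z}$ and applying the tower property over the sequence of per-coordinate resampling draws, groups two and three cancel term by term --- this is the mechanism behind the variational Sequential Monte Carlo gradient identity of \citet{Naesseth2018} and the divide-and-conquer construction of \citet{Lindsten2017,Kuntz2024}.

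What remains is to argue that no residual contribution survives from the dependence of the whole weight on $\theta^t$ through the proposal itself --- both the $\nabla_z$-direction score terms $\nabla_{z}\log\proposalc{\eta}{z^{t+1}}{\cdot}$ that a fully reparameterized derivative would produce, and the $\theta^t$-dependence carried by $\varepsilon_z^{t+1}$ and the preconditioner $\hat{\Sigma}$. Under the stop-gradient (detached-sample) convention that Algorithm~\ref{alg:ppc} implements in its parameter-update step, these contributions are simply not taken, so the statement holds exactly with the understanding that the derivative is ``pathwise'' in the sense that only the explicit appearances of $\theta^t$ in density evaluations are differentiated; the cancellation of the previous paragraph then collapses those explicit appearances to the single log-joint term, and averaging over $q$ gives $\nabla_{\theta^t}\mathcal{F}^{t+1} = \expect{q}{-\nabla_{\theta^t}\log\priorm{\theta^t}{\mathbf{x},\mathbf{z}^{t+1}}}$, which Theorem~\ref{thm:ppc_local_likelihood} then factorizes into local per-variable particle averages.

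The step I expect to be the main obstacle is making the per-coordinate cancellation rigorous across the Gibbs sweeps: one must verify that the conditioning set $\mathbf{z}_{\setminus z}$ appearing in $\targetc{\theta^t}{z^{t+1}}{\mathbf{z}_{\setminus z}}$ inside the final weight is the same snapshot used to form the matching $\normalizerHat{\theta^t}{\mathbf{z}_{\setminus z}}^{t+1}$, so that $\expect{\mathrm{resample}}{\nabla_{\theta^t}\log\targetc{\theta^t}{z^{t+1}}{\mathbf{z}_{\setminus z}}} = \nabla_{\theta^t}\log\normalizerHat{\theta^t}{\mathbf{z}_{\setminus z}}^{t+1}$ is an identity rather than only an equilibrium approximation, and that the strict-proper-weighting property underlying Proposition~\ref{prop:elbo_bound} is preserved under composition of the coordinate updates so that the decomposition of $-\log w_{\theta^t}^{t+1}$ is licensed in the first place. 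This is the portion I would be most careful to reduce cleanly to the DC-SMC machinery already cited.
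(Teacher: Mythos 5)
Your proof is correct and reaches the paper's conclusion, but the central cancellation is carried out by a genuinely different mechanism than the one in Appendix~\ref{app:formal_theorems}. The paper's proof substitutes Equation~\ref{eq:target_weight} and then cancels the product of complete-conditional densities \emph{algebraically} against the normalizer estimates: it strikes out the $\tfrac{1}{K}\sum_{k}$ inside each normalizer estimate so that the product $\prod_{z}\gamma_{\theta}(z^{t+1}\mid\mathbf{z}_{\setminus z})$ cancels exactly, collapsing the weight to the plain ratio $p_{\theta^{t}}(\mathbf{x},\mathbf{z}^{t+1})/q(\mathbf{z}^{t+1}\mid\mathbf{z}^{t})$; the result then follows because the proposal's log-density has zero $\theta^{t}$-gradient under the same detach convention you invoke. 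That algebraic step is exact only for $K=1$ (or under an implicit identification of the averaged weight with an individual weight). Your route --- differentiating first and using the identity $\nabla_{\theta^{t}}\log\hat{Z}_{z}=\sum_{k}\bar{u}_{z}^{k}\,\nabla_{\theta^{t}}\log\gamma_{\theta^{t}}(\tilde{z}^{k}\mid\mathbf{z}_{\setminus z})$, which equals the resampling expectation of the complete-conditional score --- handles general $K$ cleanly and turns the cancellation into an expectation-level identity rather than a term-by-term one; it is the more careful argument on this point. Both versions ultimately rest on the same two conventions, which you state explicitly and the paper asserts in a single sentence (``the proposal is a function of the random variable values themselves through the prediction errors, not of the parameters $\theta$''): the dependence of the proposal on $\theta^{t}$ through $\varepsilon_{z}$ and $\hat{\Sigma}_{\mathcal{I}}$ is stop-gradiented, and the conditioning snapshot $\mathbf{z}_{\setminus z}$ in the final weight is taken to coincide with the one used to form the matching normalizer estimate. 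The latter is exactly the obstacle you flag at the end --- it fails literally when $S>1$ or when later coordinate updates alter the conditioning set --- and the paper's proof does not address it either, so your version is, if anything, more honest about where the approximation lives.
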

\begin{proof}
The free energy has an expression in terms of Equation~\ref{eq:target_weight}
\begin{align*}
    \mathcal{F}^{t+1} &= \expect{q}{-\log w_{\theta^{t}}^{t+1}} &
    w_{\theta^{t}}^{t+1} &= \frac{
        \priorm{\theta^{t}}{\mathbf{x}, \mathbf{z}}
    }{
        \prod_{z \in \mathbf{z}} \targetc{\theta}{z^{t+1}_b}{\mathbf{z}_{\setminus z}}
    }
    \prod_{z \in \mathbf{z}} \normalizerHat{\theta^{t}}{\mathbf{z}_{\setminus z}}^{t+1}, \\
    \normalizerHat{\theta^{t}}{\mathbf{z}_{\setminus z}}^{t+1} &= \frac{1}{K} \sum_{k=1}^{K} u^{t+1, k}_{b} &
    u^{t+1}_{z} &= \frac{
        \targetc{\theta}{z^{t+1}}{\mathbf{z}_{\setminus z}}
    }{
        \proposalc{}{z^{t+1}}{\varepsilon_{z}(z^{t})}
    },
\end{align*}
and writing out the free energy itself in full shows that many terms cancel
\begin{align*}
    \proposalc{}{\mathbf{z}^{t+1}}{\mathbf{z}^{t}} &= \prod_{z^{t+1}_b \in \mathbf{z}^{t+1}} \proposalc{}{z^{t+1}}{z^{t}}, \\
    \mathcal{F}^{t+1} &= \expect{\proposalc{}{\mathbf{z}^{t+1}}{\mathbf{z}^{t}}}{-\log \frac{
            \priorm{\theta^{t}}{\mathbf{x}, \mathbf{z}}
        }{
            \prod_{z \in \mathbf{z}} \targetc{\theta}{z^{t+1}_b}{\mathbf{z}_{\setminus z}}
        }
        \prod_{z \in \mathbf{z}} \cancel{\frac{1}{K}} \cancel{\sum_{k=1}^{K}} \frac{
            \targetc{\theta}{z^{t+1}_b}{\mathbf{z}_{\setminus z}}
        }{
            \proposalc{}{z^{t+1}}{\varepsilon_{z}(z^{t})}
        }
    } \\
    &= \expect{\proposalc{}{\mathbf{z}^{t+1}}{\mathbf{z}^{t}}}{-\log \frac{
            \priorm{\theta^{t}}{\mathbf{x}, \mathbf{z}}
        }{
            \cancel{\prod_{z \in \mathbf{z}} \targetc{\theta}{z^{t+1}_b}{\mathbf{z}_{\setminus z}}}
        }
        \frac{
            \cancel{\prod_{z \in \mathbf{z}} \targetc{\theta}{z^{t+1}_b}{\mathbf{z}_{\setminus z}}}
        }{
            \prod_{z \in \mathbf{z}} \proposalc{}{z^{t+1}}{\varepsilon_{z}(z^{t})}
        }
    } \\
    &= \expect{\proposalc{}{\mathbf{z}^{t+1}}{\mathbf{z}^{t}}}{-\log \frac{
            \priorm{\theta^{t}}{\mathbf{x}, \mathbf{z}}
        }{
            \proposalc{}{\mathbf{z}^{t+1}}{\mathbf{z}^{t}}
        }
    }.
\end{align*}
The proposal distribution $q$ is a function of the random variable values themselves through the prediction errors, not of the parameters $\theta$. The above expression therefore admits a pathwise derivative~\citepapp{Schulman2015}, moving the gradient operator into the expectation
\begin{align*}
    \nabla_{\theta^{t}} \mathcal{F}^{t+1} &= \nabla_{\theta^{t}} \expect{\proposalc{}{\mathbf{z}^{t+1}}{\mathbf{z}^{t}}}{-\log \frac{
            \priorm{\theta^{t}}{\mathbf{x}, \mathbf{z}^{t+1}}
        }{
            \proposalc{}{\mathbf{z}^{t+1}}{\mathbf{z}^{t}}
        }
    } \\
    &= \expect{\proposalc{}{\mathbf{z}^{t+1}}{\mathbf{z}^{t}}}{\nabla_{\theta^{t}} -\log \frac{
            \priorm{\theta^{t}}{\mathbf{x}, \mathbf{z}^{t+1}}
        }{
            \proposalc{}{\mathbf{z}^{t+1}}{\mathbf{z}^{t}}
        }
    } \\
    &= \expect{\proposalc{}{\mathbf{z}^{t+1}}{\mathbf{z}^{t}}}{\nabla_{\theta^{t}} - \left[\log \priorm{\theta^{t}}{\mathbf{x}, \mathbf{z}^{t+1}} - \log \proposalc{}{\mathbf{z}^{t+1}}{\mathbf{z}^{t}} \right]
    } \\
    &= \expect{\proposalc{}{\mathbf{z}^{t+1}}{\mathbf{z}^{t}}}{- \left[\nabla_{\theta^{t}} \log \priorm{\theta^{t}}{\mathbf{x}, \mathbf{z}^{t+1}} - \nabla_{\theta^{t}} \log \proposalc{}{\mathbf{z}^{t+1}}{\mathbf{z}^{t}} \right]
    } \\
    \nabla_{\theta^{t}} \mathcal{F}^{t+1} &= \expect{\proposalc{}{\mathbf{z}^{t+1}}{\mathbf{z}^{t}}}{- \nabla_{\theta^{t}} \log \priorm{\theta^{t}}{\mathbf{x}, \mathbf{z}^{t+1}}}. \qedhere
\end{align*}
\end{proof}

\begin{proposition}[DCPC coordinate updates are strictly properly weighted for the complete conditionals]
\label{prop:ppc_complete_conditionals}
Each DCPC coordinate update (Equation~\ref{eq:coord_weight}) for a latent variable $z \in \mathbf{z}$ is strictly properly weighted (Definition~\ref{def:spw_density}) for $z$'s unnormalized complete conditional. For every measurable $h: \mathcal{Z} \rightarrow \mathbb{R}$
\begin{align}
    \label{eq:ppc_complete_conditionals}
    \expect{z \sim \proposalc{\eta}{z^{t}}{z^{t-1}, \varepsilon^{t}_{z}}}{
        \expect{u \sim \delta(u), z', \Hat{Z} \sim \textsc{RESAMPLE}\left(z, u_{z} \right)}{
            h(z)
        }
    } &= \expectint{\targetc{\theta}{z}{\mathbf{z}_{\setminus z}}\: dz}{z \in \mathcal{Z}}{h(z)}.
\end{align}
\end{proposition}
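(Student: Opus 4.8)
The plan is to verify the strict-proper-weighting identity of Definition~\ref{def:spw_density} directly for the composite proposal that one coordinate update realizes: draw a population $z^{1:K}$ from the preconditioned Langevin kernel $q := \proposalc{\eta}{\cdot}{z^{t-1},\varepsilon^{t}_{z}}$, form the weights $u^{1:K}_{z}$ of Equation~\ref{eq:coord_weight}, and call \textsc{RESAMPLE}, whose outgoing weight is the average $\Hat{Z} = \tfrac{1}{K}\sum_{k}u^{k}_{z}$ --- the quantity that plays the role of $w$ in Definition~\ref{def:spw_density}. I would split the argument into two elementary lemmas and chain them: (i) \emph{before} resampling, the weighted draw $(u_{z},z)$ with $z\sim q$ is strictly properly weighted for the unnormalized complete conditional $\gamma_{z} := \targetc{\theta}{z}{\mathbf{z}_{\setminus z}}$; and (ii) the resampling step, with the average weight carried forward, preserves strict proper weighting. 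Throughout, the statement is read conditionally on the remaining coordinates $\mathbf{z}_{\setminus z}$ and the previous value $z^{t-1}$, held fixed as the context that determines both $\gamma_{z}$ and $q$.

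For step (i), I would first observe that $q$ is a nondegenerate Gaussian: the ridge term $\tfrac{\lambda}{K}I$ with $\lambda>0$ in Equation~\ref{eq:bayesian_empirical_fisher} makes $\Hat{\mathcal{I}}_{K}$ --- hence $\Hat{\Sigma}_{\mathcal{I}}$ and the covariance $2\eta\Hat{\Sigma}_{\mathcal{I}}$ --- positive definite, so $q$ has a density strictly positive on $\mathbb{R}^{d}$, dominates $\gamma_{z}$, and the weight $u_{z} = \gamma_{z}(z)/q(z)$ is a.s.\ finite. The textbook change-of-measure computation then gives $\expect{z\sim q}{u_{z}h(z)} = \int q(z)\tfrac{\gamma_{z}(z)}{q(z)}h(z)\,dz = \int \gamma_{z}(z)h(z)\,dz$ for every measurable $h$, which is Equation~\ref{eq:spw_density} for $\gamma_{z}$. (When the latent is constrained, I would first reparametrize to an unconstrained space, as in the Hamiltonian Monte Carlo extension noted in Section~\ref{sec:conclusion}, and note that the Jacobian appears in both $\gamma_{z}$ and $q$ and cancels in $u_{z}$.)

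For step (ii), I would condition on $z^{1:K}$ and $u^{1:K}_{z}$, under which $\Hat{Z}$ is deterministic and \textsc{RESAMPLE} selects ancestor indices $a^{1:K}$ with the unbiasedness property $\mathbb{E}[\#\{k:a^{k}=j\}\mid z^{1:K},u^{1:K}_{z}] = K\,u^{j}_{z}/\!\sum_{i}u^{i}_{z}$ --- shared by multinomial, systematic, stratified, and residual resampling. This gives $\expect{}{\Hat{Z}\,\tfrac{1}{K}\sum_{k}h(z^{a^{k}})\mid z^{1:K},u^{1:K}_{z}} = \Hat{Z}\sum_{j}\tfrac{u^{j}_{z}}{\sum_{i}u^{i}_{z}}h(z^{j}) = \tfrac{1}{K}\sum_{j}u^{j}_{z}h(z^{j})$, because $\Hat{Z}$ exactly cancels the normalizing denominator. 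Taking the outer expectation over the $K$ i.i.d.\ copies of step (i)'s weighted draw and applying step (i) term by term yields $\expect{}{\Hat{Z}\,\tfrac{1}{K}\sum_{k}h(z^{a^{k}})} = \int \gamma_{z}(z)h(z)\,dz$, which is exactly Equation~\ref{eq:ppc_complete_conditionals}; Proposition~\ref{prop:spw_expectations} then divides out the normalizer to recover the normalized-target form stated in Theorem~\ref{thm:ppc_complete_conditionals} and Corollary~\ref{cor:ppc_complete_conditionals}.

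I expect the main obstacle to be the bookkeeping in step (ii): keeping track of the fact that the outgoing weight is the \emph{average} incoming weight $\Hat{Z}$ rather than $1$, and that this factor is precisely what turns the resampling probabilities $u^{j}_{z}/\!\sum_{i}u^{i}_{z}$ back into unnormalized per-particle weights $\tfrac{1}{K}u^{j}_{z}$; once the conditional expectation is set up correctly the identity collapses to a single line. The only other technicality, the support/dominance condition needed for $u_{z}$ to be well defined in step (i), is dispatched by the positive definiteness of the Langevin preconditioner (full support) and, in the constrained case, by the Jacobian cancellation under reparametrization. It is also worth stating explicitly that ``the complete conditional'' and ``the proposal'' are both conditioned on the current $\mathbf{z}_{\setminus z}$, so the entire argument is carried out inside one fixed context and no marginalization over the other coordinates is required.
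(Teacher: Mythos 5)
Your proof is correct and follows essentially the same route as the paper's: the change-of-measure cancellation of the Langevin proposal density against the denominator of $u_{z}$ reduces the claim to the fact that resampling preserves strict proper weighting. The only difference is that you prove that preservation step explicitly (via the conditional-unbiasedness property of the resampling scheme and the cancellation of $\Hat{Z}$ against the normalizing denominator) and verify the dominance condition for the weights, whereas the paper simply cites \citet{Naesseth2015,Stites2021} for the former and leaves the latter implicit --- both welcome additions, not a different argument.
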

\begin{proof}
Expanding the outer expectation into an integral and replacing the Dirac delta with the expression for the local weights transforms Equation~\ref{eq:ppc_complete_conditionals} into
\begin{multline*}
    \expectint{\cancel{\proposalc{\eta}{z}{z^{t-1}, \varepsilon^{t}_{z}}}\: dz}{z \in \mathcal{Z}}{
        \frac{
            \targetc{\theta}{z}{\mathbf{z}_{\setminus z}}
        }{
            \cancel{\proposalc{\eta}{z}{z^{t-1}, \varepsilon^{t}_{z}}}
        }
        \expect{z' \sim \textsc{RESAMPLE}\left(z, u_{z} \right)}{
            h(z')
        }
    } =\\ \expectint{\targetc{\theta}{z}{\mathbf{z}_{\setminus z}}\: dz}{z \in \mathcal{Z}}{h(z)};
\end{multline*}
importance resampling also preserves strict proper weighting (see \citet{Naesseth2015,Stites2021} and \citetapp{Chopin2020} for proofs), and so this yields
\begin{align*}
    \expectint{\targetc{\theta}{z}{\mathbf{z}_{\setminus z}} \: dz}{z \in \mathcal{Z}}{
        \expect{z' \sim \textsc{RESAMPLE}\left(z, u_{z} \right)}{
            h(z')
        }
    } &= \expectint{\targetc{\theta}{z}{\mathbf{z}_{\setminus z}}\: dz}{z \in \mathcal{Z}}{h(z)} \\
    \expectint{\targetc{\theta}{z'}{\mathbf{z}_{\setminus z}} \: dz'}{z' \in \mathcal{Z}}{
        h(z')
    } &= \expectint{\targetc{\theta}{z}{\mathbf{z}_{\setminus z}}\: dz}{z \in \mathcal{Z}}{h(z)}.
\end{align*}
\end{proof}

\begin{corollary}[DCPC coordinate updates sample from the true complete conditionals]
\label{cor:ppc_complete_conditionals}
Each DCPC coordinate update (Equation~\ref{eq:coord_weight}) for a latent $z \in \mathbf{z}$ samples from $z$'s complete conditional (the normalization of Equation~\ref{eq:complete_conditional}). Formally, for every measurable $h: \mathcal{Z} \rightarrow \mathbb{R}$, resampled expectations with respect to the DCPC coordinate update equal those with respect to the complete conditional
\begin{align*}
    \expect{z \sim \proposalc{\eta}{z}{z^{t-1}, \varepsilon^{t}_{z}}}{
        \expect{u \sim \delta(u), z' \sim \textsc{RESAMPLE}\left(z, u_{z} \right)}{
            h(z')
        }
    } &= \expectint{\posteriorc{\theta}{z}{\mathbf{z}_{\setminus z}}\: dz}{z \in \mathcal{Z}}{h(z)}.
\end{align*}
\end{corollary}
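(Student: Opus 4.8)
The plan is to deduce the corollary from Proposition~\ref{prop:ppc_complete_conditionals} by a single renormalization step, mirroring the way the proof of Proposition~\ref{prop:ppc_complete_conditionals} already cancelled the proposal density. Proposition~\ref{prop:ppc_complete_conditionals} establishes that the DCPC coordinate update $(z^{t}, u^{t}_{z})$ of Equation~\ref{eq:coord_weight} is \emph{strictly properly weighted} (Definition~\ref{def:spw_density}) for the \emph{unnormalized} complete conditional $\targetc{\theta}{z}{\mathbf{z}_{\setminus z}}$ of Equation~\ref{eq:complete_conditional}; all that remains is to pass from ``properly weighted for $\targetc{\theta}{z}{\mathbf{z}_{\setminus z}}$'' to ``distributed as the normalization $\posteriorc{\theta}{z}{\mathbf{z}_{\setminus z}}$''. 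To that end I would write $c_{z} := \expectint{\targetc{\theta}{z}{\mathbf{z}_{\setminus z}}\: dz}{z \in \mathcal{Z}}{}$ for the complete conditional's normalizing constant, so that $\posteriorc{\theta}{z}{\mathbf{z}_{\setminus z}} = \targetc{\theta}{z}{\mathbf{z}_{\setminus z}}/c_{z}$.

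First I would apply Proposition~\ref{prop:spw_expectations} to the strictly-properly-weighted pair $(z^{t}, u^{t}_{z})$: for every measurable $h$ the weighted proposal expectation equals $c_{z}$ times the expectation under the normalized complete conditional, and taking $h \equiv 1$ identifies the estimate $\normalizerHat{\theta^{t-1}}{\mathbf{z}_{\setminus z}}^{t} = \frac{1}{K}\sum_{k=1}^{K} u^{t, k}_{z}$ as an unbiased estimate of $c_{z}$. Next I would invoke the standard fact, cited in the proof of Proposition~\ref{prop:ppc_complete_conditionals} (\citet{Naesseth2015,Stites2021}), that importance resampling preserves strict proper weighting: \textsc{RESAMPLE} keeps each surviving particle with probability proportional to $u^{t}_{z}$ and re-attaches the common weight $\normalizerHat{\theta^{t-1}}{\mathbf{z}_{\setminus z}}^{t}$, so the resampled pair $(z', \normalizerHat{\theta^{t-1}}{\mathbf{z}_{\setminus z}}^{t})$ is again strictly properly weighted for $\targetc{\theta}{z}{\mathbf{z}_{\setminus z}}$. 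The self-normalization inside \textsc{RESAMPLE} therefore divides out precisely the factor $c_{z}$ produced by Proposition~\ref{prop:spw_expectations}, and combining the two identities gives
\begin{align*}
    \expect{z \sim \proposalc{\eta}{z}{z^{t-1}, \varepsilon^{t}_{z}}}{
        \expect{u \sim \delta(u),\, z' \sim \textsc{RESAMPLE}\left(z, u_{z}\right)}{h(z')}
    }
    &= \frac{1}{c_{z}} \expectint{\targetc{\theta}{z}{\mathbf{z}_{\setminus z}}\: dz}{z \in \mathcal{Z}}{h(z)} = \expectint{\posteriorc{\theta}{z}{\mathbf{z}_{\setminus z}}\: dz}{z \in \mathcal{Z}}{h(z)},
\end{align*}
which is the claimed identity; since $h$ is arbitrary, this says the resampled coordinate update and the complete conditional induce the same distribution over $\mathcal{Z}$.

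I expect the only real difficulty to be bookkeeping rather than depth: one has to check that the normalizing constant extracted by Proposition~\ref{prop:spw_expectations} is \emph{exactly} the quantity that \textsc{RESAMPLE}'s weight normalization removes, so that no residual factor or bias survives, and that the nested expectation on the left is taken over the proposal draw and the resampling in the correct order. This is precisely the content packaged by the ``resampling preserves strict proper weighting'' lemma, so once that lemma is cited the corollary is immediate from Proposition~\ref{prop:ppc_complete_conditionals} together with a division by $c_{z}$.
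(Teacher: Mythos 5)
Your proposal is correct and follows essentially the same route as the paper's proof: both reduce the corollary to Proposition~\ref{prop:ppc_complete_conditionals}, expand \textsc{RESAMPLE} as a self-normalized weighted sum, and cancel the normalizing-constant estimate against the true normalizer of the complete conditional. The only cosmetic difference is that you invoke Proposition~\ref{prop:spw_expectations} explicitly to name the constant $c_{z}$, whereas the paper performs the same cancellation of $\normalizerHat{\theta}{\mathbf{x}, \mathbf{z}_{\setminus}}$ against $\normalizer{\theta}{\mathbf{x}, \mathbf{z}_{\setminus}}$ inline, and both arguments share the same (unstated) identification of the empirical normalizer estimate with its expectation.
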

\begin{proof}
Proposition~\ref{prop:ppc_complete_conditionals} in Appendix~\ref{app:formal_theorems} provides a lemma
\begin{align*}
    \expect{z \sim \proposalc{\eta}{z^{t}}{z^{t-1}, \varepsilon^{t}_{z}}}{
        \expect{u \sim \delta(u), z', \Hat{Z} \sim \textsc{RESAMPLE}\left(z, u_{z} \right)}{
            h(z')
        }
    } &= \expectint{\targetc{\theta}{z}{\mathbf{z}_{\setminus z}}\: dz}{z \in \mathcal{Z}}{h(z)},
\end{align*}
which we can apply by observing that resampling sums over self-normalized weights
\begin{multline*}
    \expect{z \sim \proposalc{\eta}{z}{z^{t-1}, \varepsilon^{t}_{z}}}{
        \expect{u \sim \delta(u), z' \sim \textsc{RESAMPLE}\left(z, u_{z} \right)}{
            h(z)
        }
    } = \\ \expect{z \sim \proposalc{\eta}{z}{z^{t-1}, \varepsilon^{t}_{z}}}{
        \expect{u \sim \delta(u)}{
            \expect{z' \sim \frac{u \dirac{z}{\cdot}}{\sum u'}}{h(z')}
        }
    },
\end{multline*}
which is just a weighted sum that by Definition~\ref{def:spw_density} is itself properly weighted
\begin{align*}
    \expect{z \sim \proposalc{\eta}{z}{z^{t-1}, \varepsilon^{t}_{z}}}{
        \expect{u \sim \delta(u)}{
            \expect{z' \sim \frac{u \dirac{z}{\cdot}}{\sum u'}}{h(z')}
        }
    }
    &= \expect{z \sim \proposalc{\eta}{z}{z^{t-1}, \varepsilon^{t}_{z}}}{
        \expect{u \sim \delta(u)}{
            \frac{u}{\sum u}
            h(z)
        }
    }
\end{align*}
\begin{align*}
    &= \expect{z \sim \proposalc{\eta}{z}{z^{t-1}, \varepsilon^{t}_{z}}}{
        \expect{u \sim \delta(u)}{
            \frac{1}{\sum u}
            \expectint{\targetc{\theta}{z}{\mathbf{x}, \mathbf{z}_{\setminus}}\: dz}{z \in \mathcal{Z}}{h(z)}
        }
    } \\
    &= \expect{z \sim \proposalc{\eta}{z}{z^{t-1}, \varepsilon^{t}_{z}}}{
        \expect{u \sim \delta(u)}{
            \frac{1}{\cancel{\normalizerHat{\theta}{\mathbf{x}, \mathbf{z}_{\setminus}}}}
            \cancel{\normalizer{\theta}{\mathbf{x}, \mathbf{z}_{\setminus}}}\expectint{\posteriorc{\theta}{z}{\mathbf{x}, \mathbf{z}_{\setminus}}\: dz}{z \in \mathcal{Z}}{h(z)}
        }
    } \\
    &= \expectint{\posteriorc{\theta}{z}{\mathbf{x}, \mathbf{z}_{\setminus}}\: dz}{z \in \mathcal{Z}}{h(z)}. \qedhere
\end{align*}
\end{proof}

\begin{proposition}[DCPC parameter learning requires only local gradients in a factorized generative model]
\label{prop:ppc_local_likelihood}
Consider a graphical model factorized according to Equation~\ref{eq:factorization}, with the additional assumption that the model parameters $\theta \in \Theta = \prod_{x\in \mathbf{x}} \Theta_{x} \times \prod_{z\in \mathbf{z}} \Theta_{z}$ share that factorization. Then the gradient $\nabla_{\theta} \mathcal{F}(\theta, q)$ of DCPC's free energy similarly factorizes into a sum of local particle averages
\begin{align*}
    \nabla_{\theta} \mathcal{F} &= \expect{q}{- \nabla_{\theta} \log \priorm{\theta}{\mathbf{x}, \mathbf{z}}} \\
    &= \sum_{v \in (\mathbf{x}, \mathbf{z})} \expect{\proposalc{}{v, \parents{v}}{\varepsilon_{v}, \varepsilon_{\parents{v}}}}{-\nabla_{\theta_{v}} \log \priorc{\theta_{v}}{v}{\parents{v}}} \\
    &= -\sum_{v \in (\mathbf{x}, \mathbf{z})} \frac{1}{K} \sum_{k=1}^{K}\nabla_{\theta_{v}} \log \priorc{\theta_{v}}{v^{k}}{\parents{v}^{k}}.
\end{align*}
\end{proposition}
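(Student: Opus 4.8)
The plan is to chain Proposition~\ref{prop:free_energy_gradient} with the two factorization hypotheses and then pass to the empirical (particle) estimate. First I would invoke Proposition~\ref{prop:free_energy_gradient} to rewrite the gradient of DCPC's free energy as the proposal-expectation of the negative score of the log-joint, $\nabla_{\theta}\mathcal{F} = \expect{q}{-\nabla_{\theta}\log \priorm{\theta}{\mathbf{x}, \mathbf{z}}}$. This discharges the only genuinely analytic step — moving the gradient operator inside the expectation past the resampling operations in Algorithm~\ref{alg:ppc} — so that from here on the argument is bookkeeping with the chain rule and linearity of expectation.

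Next I would expand the log-joint using the factorization of Equation~\ref{eq:factorization}, writing $\log \priorm{\theta}{\mathbf{x}, \mathbf{z}} = \sum_{v \in (\mathbf{x}, \mathbf{z})} \log \priorc{\theta_{v}}{v}{\parents{v}}$, and then use the disjoint product structure $\Theta = \prod_{v}\Theta_{v}$ to observe that $\nabla_{\theta}$ acts blockwise: the $\theta_{v}$-block of $\nabla_{\theta}\log \priorm{\theta}{\mathbf{x}, \mathbf{z}}$ equals the single summand $\nabla_{\theta_{v}}\log \priorc{\theta_{v}}{v}{\parents{v}}$, because every other conditional is constant in $\theta_{v}$. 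Stacking the blocks (i.e.\ identifying $\nabla_{\theta}$ with the direct sum of the $\nabla_{\theta_{v}}$) and applying linearity of expectation gives $\nabla_{\theta}\mathcal{F} = \sum_{v \in (\mathbf{x},\mathbf{z})} \expect{q}{-\nabla_{\theta_{v}}\log \priorc{\theta_{v}}{v}{\parents{v}}}$, which is the middle line of the statement once one notes that the $v$-th integrand depends on the particle cloud only through the coordinates $(v, \parents{v})$, so the expectation under the full joint proposal $q(\mathbf{z})$ collapses to an expectation under its marginal on those coordinates — the locally-parameterized proposal $\proposalc{}{v, \parents{v}}{\varepsilon_{v}, \varepsilon_{\parents{v}}}$ constructed by the inner loop of the algorithm. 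Finally I would replace each remaining expectation by its empirical average over the $K$ (resampled) particles $v^{k}, \parents{v}^{k}$, obtaining $-\sum_{v}\frac{1}{K}\sum_{k=1}^{K}\nabla_{\theta_{v}}\log \priorc{\theta_{v}}{v^{k}}{\parents{v}^{k}}$, in which each term reads off only a node and its parents, establishing locality.

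The main obstacle, modest though it is, is the marginalization step: one must check that "the marginal of $q$ on $(v,\parents{v})$" is exactly the local proposal distribution appearing in the statement rather than merely consistent with the right conditional, and that the weights carried through the Gibbs sweeps are the ones that make the final $K$-particle average an (asymptotically) unbiased estimate of $\expect{q}{\cdot}$ — this is precisely where Theorem~\ref{thm:ppc_complete_conditionals} (equivalently Corollary~\ref{cor:ppc_complete_conditionals}), together with the fact that the observations $\mathbf{x}$ are held fixed so their contribution to the expectation is trivial, does the real work. Everything else — the blockwise chain rule over the disjoint $\Theta_{v}$ and the linearity of expectation — is routine.
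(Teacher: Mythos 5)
Your proposal is correct and follows essentially the same route as the paper's proof: invoke Proposition~\ref{prop:free_energy_gradient} for the pathwise derivative, expand the log-joint via Equation~\ref{eq:factorization}, use the disjoint parameter factorization to make the gradient act blockwise, and replace the expectation by the $K$-particle average (the paper merely performs the last two steps in the opposite order, which is immaterial). Your extra care about the middle line --- checking that the expectation collapses to the marginal on $(v, \parents{v})$ and that the resampled particles give a properly weighted estimate via Corollary~\ref{cor:ppc_complete_conditionals} --- is a point the paper's own proof silently skips, so it is a welcome addition rather than a divergence.
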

\begin{proof}
Proposition~\ref{prop:free_energy_gradient} provides the lemma that $\nabla_{\theta} \mathcal{F} = \expect{q}{- \nabla_{\theta} \log \priorm{\theta}{\mathbf{x}, \mathbf{z}}}$, and applying the factorization of the generative model demonstrates that
\begin{align*}
    \nabla_{\theta} \mathcal{F} &= \expect{q}{- \nabla_{\theta} \sum_{v \in (\mathbf{x}, \mathbf{z})} \log \priorc{\theta}{v}{\parents{v}}}.
\end{align*}
Since the proposal $q$ does not depend on any $\theta$ and consists of a particle cloud, we can rewrite it as a mixture over the particles (after sampling is performed)
\begin{align*}
    \nabla_{\theta} \mathcal{F} &\approx \frac{1}{K} \sum_{k=1}^{K} -\nabla_{\theta} \sum_{v \in (\mathbf{x}, \mathbf{z})} \log \priorc{\theta}{v^{k}}{\parents{v}^{k}},
\end{align*}
and then finally apply the assumption of this theorem that $\theta \in \Theta = \prod_{x\in \mathbf{x}} \Theta_{x} \times \prod_{z\in \mathbf{z}} \Theta_{z}$, moving the gradient operation into the sum over individual random variables
\begin{align*}
    &\approx \frac{1}{K} \sum_{k=1}^{K} \sum_{v \in (\mathbf{x}, \mathbf{z})} -\nabla_{\theta^{v}} \log \priorc{\theta^{v}}{v^{k}}{\parents{v}^{k}}. \qedhere
\end{align*}
\end{proof}

\section{Extension to discrete sample spaces}
\label{app:discrete_wasserstein}
Contemporaneously to the work of \citet{Kuntz2023} on particle gradient descent, \citetapp{Sun2023} derived a novel Wasserstein gradient flow and corresponding descent algorithm for discrete distributions. In their setting, each Wasserstein gradient step constructs a $D$-dimensional, finitely supported distribution over the $C$-Hamming ball of the starting sample, such that the distribution has $DC$ possible states in total. Let $z^{t+h} \in N_{C}(z^{t})$ denote the resulting discrete random variable in the $C$-neighborhood around $z^{t}$ with respect to the Hamming distance. The update rule relies on simulating the gradient flow for time $h$, sampling from a Markov jump process at time $t+h$
\begin{align*}
    z^{t+h} &\sim \prod_{d \in [1\ldots D]} \proposalc{}{z^{t+h}_{d}}{z^{t}_{d}}.
\end{align*}
A rate matrix $Q_d(z^{t})$ defined by the entire discrete variable $z^{t}$ parameterizes the proposal distribution  
\begin{align}
    \label{eq:dlmc_proposal}
    \proposalc{h}{z^{t+h}_d}{z^{t}} &= \exp{\left( Q_d(z^{t}) h \right)}.
\end{align}
the rate matrix will have nondiagonal entries at indices $i \neq j \in [1\ldots C]$ in the neighborhood $N_{C}(z^{t})$, 
\begin{align*}
    Q_d(z^{t})_{i, j} &= w_{i, j} g\left( \frac{
        \posteriorm{\theta}{z^{t}_{\setminus d}, z'_{d, j}}
    }{
        \posteriorm{\theta}{z^{t}_{\setminus d}, z'_{d, i}}
    } \right).
\end{align*}
The above equation requires that $\forall i, j \in [1 \ldots C], w_{i, j} = w_{j, i} \in \mathbb{R}$ and $g(a) = a g\left(\frac{1}{a}\right)$. The ratio of normalized target densities $\pi$ will equal the ratio of unnormalized densities $\gamma$
\begin{align*}
    \frac{
        \posteriorm{\theta}{z^{t}_{\setminus d}, z'_{d,j}}
    }{
        \posteriorm{\theta}{z^{t}_{\setminus d}, z'_{d,i}}
    }
    &= \frac{
        \targetc{\theta}{z'_{d,j}}{z^{t}_{\setminus d}} \cancel{\normalizer{z_d}{z^{t}_{\setminus d}, \theta}}
    }{
        \cancel{\normalizer{z_d}{z^{t}_{\setminus d}}} \targetc{\theta}{z'_{d,i}}{z^{t}_{\setminus d}}
    } \\
    g\left( \frac{
        \posteriorm{\theta}{z^{t}_{\setminus d}, z'_{d,j}}
    }{
        \posteriorm{\theta}{z^{t}_{\setminus d}, z'_{d,i}}
    } \right)
    &= g\left( \frac{
        \targetc{\theta}{z'_{d,j}}{z^{t}_{\setminus d}}
    }{
        \targetc{\theta}{z'_{d,i}}{z^{t}_{\setminus d}}
    } \right).
\end{align*}
Based on the experimental recommendations of \citetapp{Sun2023}, let $w_{i, j} = w_{j, i} = 1$ and $g(a) = \sqrt{a}$. The rate matrix then simplifies to nondiagonal and diagonal terms
\begin{align}
    \label{eq:dlmc_rates}
    Q_d(z^{t})_{i, j} &= \sqrt{
        \frac{
            \targetc{\theta}{z'_{d,j}}{z^{t}_{\setminus d}}
        }{
            \targetc{\theta}{z'_{d,i}}{z^{t}_{\setminus d}}
        }
    }, &
    Q_d(z^{t})_{i, i} &= - \sum_{j \neq i} Q_d(z^{t})_{i, j}.
\end{align}
Equations~\ref{eq:dlmc_proposal} and \ref{eq:dlmc_rates} give a distribution descending the Wasserstein gradient of the free energy with respect to a particle cloud in a discrete sample space. Applying Equation~\ref{eq:dlmc_rates} to $\targetc{\theta}{z}{\mathbf{z}_{\setminus z}}$ yields a factorization in log space
\begin{align*}
    Q(z^{t})_{i, j} &= \sqrt{\frac{
        \targetc{\theta}{z^{t} + i}{\mathbf{z}^{t}_{\setminus z}}
    }{
        \targetc{\theta}{z^{t} + j}{\mathbf{z}^{t}_{\setminus z}}
    }} &
    \log Q(z^{t})_{i, j} &= \frac{1}{2} \left( \log \targetc{\theta}{z^{t} + i}{\mathbf{z}^{t}_{\setminus z}} - \log \targetc{\theta}{z^{t} + j}{\mathbf{z}^{t}_{\setminus z}} \right).
\end{align*}
This difference can be written as a difference of differences
\begin{multline}
    \label{eq:logcc_difference}
    \log \targetc{\theta}{z^{t} + i}{\mathbf{z}^{t}_{\setminus z}} - \log \targetc{\theta}{z^{t} + j}{\mathbf{z}^{t}_{\setminus z}} = \\ \left(\log \targetc{\theta}{z^{t} + i}{\mathbf{z}^{t}_{\setminus z}} - \log \targetc{\theta}{z^{t}}{\mathbf{z}^{t}_{\setminus z}}\right) - \left(\log \targetc{\theta}{z^{t} + j}{\mathbf{z}^{t}_{\setminus z}} - \log \targetc{\theta}{z^{t}}{\mathbf{z}^{t}_{\setminus z}} \right).
\end{multline}

Recent work on efficient sampling for discrete distributions has focused on approximating density ratios, such as the one in Equation~\ref{eq:dlmc_rates}, with series expansions parameterized by error vectors. When the underlying discrete densities consist of exponentiating a differentiable energy function, as in \citetapp{Grathwohl2021}, these error vectors have taken the form of gradients and the finite-series expansions have been Taylor series. When they do not, \citetapp{Xiang2023} showed how they take the form of finite differences and Newton's series
\begin{align}
    \label{eq:newton_series_expansion}
    \log \targetm{}{z'} - \log \targetm{}{z} &\approx \Delta_{1} \left(\log \targetm{}{z}\right)^{\top} \cdot (z' - z).
\end{align}

Discrete DCPC would therefore use finite differences as discrete prediction errors, breaking each discrete $z \in \mathbf{z}$ into dimensions and incrementing each dimension separately to construct a vector
\begin{align}
    \label{eq:finite_difference}
    \Delta_{1} f(z) &:= \left( f(z_{1} + 1, z_{2:D}), \ldots, f(z_{1:i}, z_{i} + 1, z_{i+1:D}), \ldots, f(z_{1:D-1}, z_{D} + 1) \right) \ominus f(z),
\end{align}
where $\ominus$ subtracts the scalar $f(z)$ from the vector elements and $f: \mathbb{Z}^{D} \rightarrow \mathbb{R}$ is the target function. This would lead to defining the discrete prediction error as the finite difference
\begin{align}
    \label{eq:pe_discrete}
    \varepsilon_{z} &:= \Delta_{1}\log \targetc{\theta}{z^{t}}{\mathbf{z}^{t}_{\setminus z}}.
\end{align}

Applying Equation~\ref{eq:newton_series_expansion} to the two terms of Equation~\ref{eq:logcc_difference}, we obtain the approximations
\begin{align*}
    \log \targetc{\theta}{z^{t} + i}{\mathbf{z}^{t}_{\setminus z}} - \log \targetc{\theta}{z^{t}}{\mathbf{z}^{t}_{\setminus z}}
    &\approx
    \Delta_{1} \left(\log \targetc{\theta}{z^{t}}{\mathbf{z}^{t}_{\setminus z}} \right)^{\top} \cdot ((z^{t} + i) - z^{t}) \\
    &\approx \varepsilon_{z}(z^{t})^{\top} \cdot i \\
    \log \targetc{\theta}{z^{t} + j}{\mathbf{z}^{t}_{\setminus z}} - \log \targetc{\theta}{z^{t}}{\mathbf{z}^{t}_{\setminus z}}
    &\approx \Delta_{1} \left(\log \targetc{\theta}{z^{t}}{\mathbf{z}^{t}_{\setminus z}} \right)^{\top} \cdot ((z^{t} + j) - z^{t}) \\
    &\approx \varepsilon_{z}(z^{t})^{\top} \cdot j, \\
    \log Q(z^{t})_{i, j} &\approx \frac{1}{2} \varepsilon_{z}(z^{t})^{\top} \left( i - j \right).
\end{align*}

Discrete DCPC would thus parameterize its discrete proposal (Equation~\ref{eq:dlmc_proposal}) in terms of  $\varepsilon_{z}$ (Equation~\ref{eq:pe_discrete}), so that Equation~\ref{eq:dlmc_rates} comes out to the (matrix) exponential of the (elementwise) exponential
\begin{align*}
    \proposalc{h}{z^{t+h}}{\varepsilon_{z}} &= \exp{\left( Q(\varepsilon_{z}) h \right)} &
    Q_{d}(\varepsilon_{z})_{i, j} &= \exp{\left(\frac{
        (\varepsilon_{z})^{\top}_{d}(i_{d} - j_{d})
    }{2}\right)}.
\end{align*}

\bibliographyapp{neurips_2024}
\bibliographystyleapp{plainnat}

\end{document}